\crefname{section}{Sec.}{Secs.}
\Crefname{section}{Section}{Sections}
\Crefname{table}{Table}{Tables}
\crefname{table}{Tab.}{Tabs.}
\newcommand\numberthis{\addtocounter{equation}{1}\tag{\theequation}} 
\DeclareMathOperator{\E}{\mathbb{E}}
\newtheorem{definition}{Definition}[section]
\newtheorem{proposition}{Proposition}
\newcommand{\EGzero}{e_0}
\newcommand{\EGone}{e_1}
\newcommand{\EGtwo}{e_2}
\newcommand{\EGthree}{e_3}
\newcommand{\DIzero}{d_0}
\newcommand{\DIone}{d_1}
\newcommand{\DItwo}{d_2}
\newcommand{\rbr}[1]{\left(#1\right)}
\newcommand{\sbr}[1]{\left[#1\right]}
\newcommand{\cbr}[1]{\left\{#1\right\}}
\newcommand{\ignore}[1]{}
\begin{document}

\title{Failure Modes of Domain Generalization Algorithms}

\author{Tigran Galstyan$^{1, 3}$, Hrayr Harutyunyan$^2$, Hrant Khachatrian$^{1,4}$, Greg Ver Steeg$^2$, Aram Galstyan$^2$\\
{\normalsize $^1$ YerevaNN, \ \ $^2$ USC Information Sciences Institute, $^3$ Russian-Armenian University, $^4$ Yerevan State University}
}
\maketitle

\begin{abstract}
Domain generalization algorithms use training data from multiple domains to learn models that generalize well to unseen domains. 
While recently proposed benchmarks demonstrate that most of the existing algorithms do not outperform simple baselines, the established evaluation methods fail to expose the impact of various factors that contribute to the poor performance.
In this paper we propose an evaluation framework for domain generalization algorithms that allows decomposition of the error into components capturing distinct aspects of generalization.
Inspired by the prevalence of algorithms based on the idea of domain-invariant representation learning, we extend the evaluation framework to capture various types of failures in achieving invariance.
We show that the largest contributor to the generalization error varies across methods, datasets, regularization strengths and even training lengths.
We observe two problems associated with the strategy of learning domain-invariant representations.
On Colored MNIST, most domain generalization algorithms fail because they reach domain-invariance only on the training domains.
On Camelyon-17, domain-invariance degrades the quality of representations on unseen domains.
We hypothesize that focusing instead on tuning the classifier on top of a rich representation can be a promising direction.

\end{abstract}

\section{Introduction}
\label{introduction}
Over the past decade machine learning research was predominantly focused on settings where the learner observes training data from an unknown distribution and is evaluated on testing data, sampled from the same distribution.
While modern deep learning approaches excel in such settings, they do significantly worse when the test data comes from a different distribution~\cite{torralba2011unbiased, pmlr-v97-recht19a}.
These methods might rely on dataset biases to perform well, and fail when those biases are eliminated~\cite{beery2018recognition, geirhos2020shortcut}.

The goal of generalizing beyond training distribution is formulated in the \emph{domain generalization (DG)} task, where the learner observes training data from multiple domains and is evaluated on unseen domains.
Naturally, it is assumed that training and testing domains have some invariant properties or mechanisms, which allow generalization from one to another.
At a high level, all domain generalization approaches seek to capture those invariances, but do that differently.
A few of the possible directions of achieving domain generalization are: learning domain-invariant representations~\cite{muandet2013domain, ganin2016domain}, learning class-conditioned domain-invariant representations~\cite{li2018deep, galstyan2020robust, zhao2020domain}, using robust loss functions~\cite{Sagawa*2020Distributionally}, learning invariant causal predictors ~\cite{arjovsky2019invariant}, and using meta-learning~\cite{li2018learning}.
Most of the methods listed above outperform the straightforward empirical risk minimization (ERM) approach on toy domain generalization instances (e.g., colored MNIST).
However, Gulrajani and Lopez-Paz~\cite{domainbed} demonstrate that when evaluated on realistic DG instances, these methods are unable to outperform ERM significantly.
To improve domain generalization methods or propose new ones, we need to understand why and how do domain generalization methods fail.
This is the main goal of this paper.

Our contributions are threefold. First, we characterize the general failure modes of domain generalization methods: training set underfitting, test set inseparability, training-test misalignment and classifier non-invariance. We develop tools that measure the contribution of each of these failures in the total error of a given model.
Inspired by the popularity of the methods based on invariant representation learning, we also characterize failure modes related to achieving domain invariance.
Second, we identify two common patterns of generalization failures. In the first pattern, many algorithms achieve domain-invariant representations across the training domains, but not on unseen domains, while the generalization error is negatively correlated with the representation invariance on unseen domains. The second pattern is when domain invariance is increased across all domains, but the increase coincides with a degradation of the representations of unseen domains, thus limiting the accuracy of the models. 
Third, we show that by fixing the representations it is possible to isolate the classifier non-invariance failure, and significantly improve the generalization even with the most basic algorithms. These findings additionally confirm that domain-invariant representations are neither necessary nor sufficient for successful domain generalization.

The code for measuring contributions of failure modes is available at \url{github.com/YerevaNN/dom-gen-failure-modes}, while the code for reproducing the experiments is available at \url{github.com/TigranGalstyan/wilds}.


\section{Related Work}
The ability to generalize beyond the training distribution is the key goal of machine learning.
Torralba and Efros~\cite{torralba2011unbiased} show that common image classification datasets have significant differences, because of which methods trained on one dataset often fail to generalize well to other datasets.
The fact that learning on a single domain is susceptible to dataset biases and spurious correlations has been confirmed in many contexts~\cite{torralba2011unbiased, beery2018recognition,dai2018dark,albadawy2018deep,pmlr-v97-recht19a,geirhos2020shortcut}.

A few settings focus on generalization outside the training distribution.
The out-of-distribution (OOD) and robustness literature focus on the case when there is a distribution shift at test time~\cite{storkey2009training, quionero2009dataset}, including but not limited to label shifts~\cite{saerens2002adjusting, label-shift-Lipton}, co-variate shifts~\cite{shimodaira2000improving, gretton2009covariate}, conditional shifts~\cite{zhang2013domain}, visual distortions~\cite{dodge2017study, hendrycks2018benchmarking}, stylistic and other changes~\cite{hendrycks2020many}.
More general settings of domain generalization and domain adaptation assume that examples from multiple domains are available for training, with the difference that in domain adaptation a collection of examples (labeled or not) from the testing domain are available for adaptation~\cite{WANG2018135}.
In this paper we focus on the domain generalization problem (also called zero-shot domain adaptation~\cite{peng2018zero}), because of its generality and better correspondence with practical settings.
Nevertheless, most of the proposed techniques and definitions can be easily extended to domain adaptation.

A group of approaches aim to get domain generalization by learning domain-invariant representations~\cite{muandet2013domain, ganin2016domain, li2018deep, galstyan2020robust, zhao2020domain}.
DANN~\cite{ganin2016domain} uses an adversarial classifier to predict domain from representations, while C-DANN~\cite{li2018deep} learns such a classifier for each domain separately.
Galstyan et al.~\cite{galstyan2020robust} regularize empirical risk minimization (ERM) with a term that uses Hilbert-Schmidt independence criterion (HSIC)~\cite{gretton2005measuring} to make representations be independent from domains conditioned on labels.
Zhao et al.~\cite{zhao2020domain} use a variety of techniques to enforce the distribution of labels conditioned on representations be the same for all domains.
DeepCORAL~\cite{sun2016deep} adds a regularization term to align the second-order statistics of representations of different domains.
Invariant risk minimization (IRM)~\cite{arjovsky2019invariant} aims to learn invariant causal predictors, by finding a representation of data such that optimal classifiers on top of representations are the same for each domain.
Li et al.~\cite{li2018deep} use meta-learning with the one-step look-ahead gradient update technique to simulate evaluating on unseen domains during the training.
GroupDRO~\cite{Sagawa*2020Distributionally} minimizes the worst-case risk across training domains. 
Recently, methods based on gradient matching have been proposed~\cite{fish,fishr}.
Finally, a few works introduce  benchmarks for evaluating domain generalization methods~\cite{domainbed, koh2020wilds}.

Gulrajani and Lopez-Paz~\cite{domainbed} demonstrate that none of the existing domain generalization approaches outperform empirical risk minimization when evaluated on realistic tasks.
There is a very limited amount of research done on why and how these domain generalization methods fail.
Rosenfeld et al.~\cite{rosenfeld2021the} study the failure modes of the IRM in theoretical settings.
Nagarajan et al.~\cite{nagarajan2021understanding} explain the mechanisms by which ERM fails on very easy out-of-domain generalization tasks.
In contrast to these works, our analysis and proposed techniques below 
are applicable to any domain generalization algorithm.

\begin{figure*}
\centering
\begin{subfigure}{0.1725\linewidth}
\includegraphics[width=\linewidth]{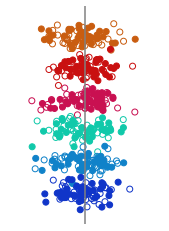}
    \caption{$\EGzero' \gg 0$}
    \label{fig:G0-fail}
\end{subfigure}
\begin{subfigure}{0.1725\linewidth}
\includegraphics[width=\linewidth]{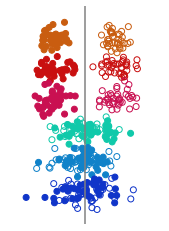}
    \caption{$\EGzero' = 0$, $\EGone \gg 0$}
    \label{fig:G1-fail}
\end{subfigure}
\begin{subfigure}{0.1725\linewidth}
\includegraphics[width=\linewidth]{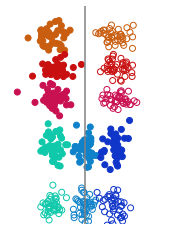}
    \caption{$\EGone' = 0$, $\EGtwo \gg 0$}
    \label{fig:G2-fail}
\end{subfigure}
\begin{subfigure}{0.1725\linewidth}
\includegraphics[width=\linewidth]{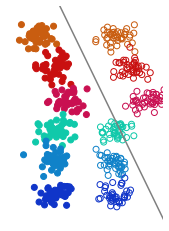}
    \caption{$\EGtwo' = 0$, $\EGthree \gg 0$}
    \label{fig:G3-fail}
\end{subfigure}
\begin{subfigure}{0.1725\linewidth}
\includegraphics[width=\linewidth]{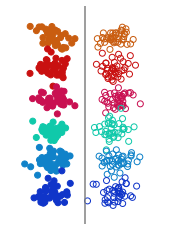}
    \caption{$\EGthree'=0$}
    \label{fig:G3-success}
\end{subfigure}
\caption{Failure modes of domain generalization algorithms demonstrated by 2D representations of a two-class data. Failure modes are described in \cref{sec:failuremodes}. Empty and full circles correspond to $0$ and $1$ classes. Colors encode the domains, training domains use colors close to red, test domains use colors close to blue. The lines correspond to decision boundaries likely to be found by a simple ERM algorithm trained on the training domains. In all subfigures the domains are distinguishable: $\DIzero \gg 0$.}
\label{fig:failures}
\end{figure*}

\section{Problem Setting and Notation}
Consider an input space $\mathcal{X}$ and output space $\mathcal{Y}$. 
A joint probability distribution $p(x,y)$ on $\mathcal{X} \times \mathcal{Y}$ is called a domain and defines a prediction task.
In the domain generalization task we assume there is a family $\mathscr{D}$ of domains that are somehow related to each other and correspond to similar prediction tasks.
The learner observes training data from $n_1$ domains, $p^1_1(x, y), \ldots, p^1_{n_1}(x, y)$.
The goal is to learn a predictor $f : \mathcal{X} \rightarrow \mathcal{Y}$ that generalizes to unseen domains from $\mathscr{D}$.
Note that in contrast to the domain \emph{adaptation} problem, here the learner cannot do any adaptation at inference time.

In this paper we focus on classification tasks, where $\mathcal{X} = \mathbb{R}^p$ and $\mathcal{Y} = \{1, 2, \ldots, C\}$.
We assume that, in addition to $n_1$ training domains, we have $n_2$ validation domains $p^2_1(x,y), \ldots, p^2_{n_2}(x,y)$, and $n_3$ test domains, $p^3_{1}(x,y), \ldots, p^3_{n_3}(x,y)$.
We assume that there is no label shift across domains: $p^1_1(y)=\ldots=p^1_{n_1}(y)=p^2_1(y)=\ldots=p^2_{n_2}(y)=p^3_1(y)=\ldots=p^3_{n_3}(y)$. We discuss this limitation in \cref{sec:app-label-shift}.
We also assume that for each domain $i=1,\ldots,n_j$, $j=1,2,3$, we are given a collection of independent samples $\mathcal{D}^j_i$ from the corresponding distribution $p^j_i(x,y)$. Each of these sets is further divided into two parts: $\mathcal{D}^j_i = \mathcal{T}^j_i \cup \mathcal{V}^j_i$. For simplicity, we also define the union of the samples of training (validation, testing) domains: $\mathcal{D}^j = \bigcup_{i=1}^{n_j} \mathcal{D}^j_i$, $\mathcal{T}^j = \bigcup_{i=1}^{n_j} \mathcal{T}^j_i$, $\mathcal{V}^j = \bigcup_{i=1}^{n_j} \mathcal{V}^j_i$, for each $j=1,2,3$.
The algorithms will be trained on samples from $\mathcal{T}^1$. The set $\mathcal{V}^1$ is called \textit{in-domain validation set} in \cite{koh2020wilds}, \textit{quasi-development set} in \cite{galstyan2020robust} and \textit{training-domain validation set} in \cite{domainbed}. It is used to measure the performance of the algorithms on unseen samples from the training domains. The performance on unseen domains is measured using $\mathcal{D}^3$. The sets $\mathcal{T}^j$ and $\mathcal{V}^j$, $j=2,3$, are used for analysis.


We consider domain generalization methods that use neural networks with two components: a feature extractor $z = h_\theta(x) \in \mathbb{R}^d$ with parameters $\theta \in \Theta$, and a classification head $\widehat{y} = f_w(z) \in \mathbb{R}^C$ with parameters $w \in \mathcal{W}$.
Throughout the paper we call $z$ the \emph{representation} of $x$.
Let $\ell : \mathbb{R}^C \times \mathcal{Y} \rightarrow \mathbb{R}$ be a loss function that measures discrepancy between a prediction and a label.
In this work $\ell$ is chosen to be the standard 0-1 loss function: $\ell(\widehat{y}, y) = \mathbf{1}\{\text{argmax}_{k} \widehat{y}_k(z(x)) \neq y\}$, which is the most popular evaluation metric in classification tasks.
Nevertheless, most of the results of this paper can be easily extended to other choices of loss functions, such as the negative log-likelihood loss function, often used for \emph{training} classifiers.






\section{Failure Modes}\label{sec:failuremodes}

We propose simple evaluation metrics to diagnose a trained model and identify a set of failures that contribute to the final error on unseen domains.
We present simplified schematic visualizations of 2-dimensional representation spaces corresponding to each of the failure modes, \eg \cref{fig:failures}. In all such figures each circle corresponds to a representation of one sample. The domain of a sample is encoded by the color of its circle. Orange-red-pink colors correspond to the training domains, while green-blue colors correspond to the test domains. Filled and empty circles are used to encode the binary labels.

To formally define the generalization and invariance metrics, we need the following additional notation.
Let $(X^1_1, Y^1_1), \ldots (X^1_{n_1},Y^1_{n_1})$ be random variables drawn from training domains $p^1_1(x,y), \ldots, p^1_{n_1}(x,y)$ and $(X^3_1, Y^3_1), \ldots (X^3_{n_3},Y^3_{n_3})$ be random variables drawn from test domains $p^3_1(x,y), \ldots, p^3_{n_3}(x,y)$.
Let $(X^{1,3},Y^{1,3})$ be a random variable drawn from the mixture of all training and test domains $p^{1,3}(x,y)=\frac{1}{n_1 + n_3}\rbr{\sum_{i=1}^{n_1}p^1_i(x,y) + \sum_{i=1}^{n_3}p^3_i(x,y)}$.

\begin{figure*}
\centering
\begin{subfigure}{0.1725\linewidth}
    \includegraphics[width=\linewidth]{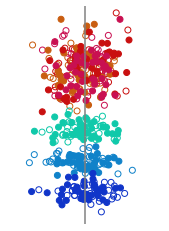}
    \caption{$\EGzero' \gg 0$}
    \label{fig:I1-G0-fail}
\end{subfigure}%
\begin{subfigure}{0.1725\linewidth}
    \includegraphics[width=\linewidth]{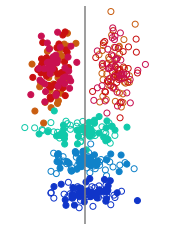}
    \caption{$\EGzero' = 0$, $\EGone' \gg 0$}
    \label{fig:I1-G1-fail}
\end{subfigure}%
\begin{subfigure}{0.1725\linewidth}
    \includegraphics[width=\linewidth]{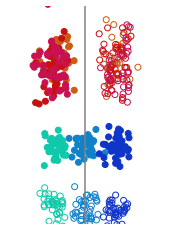}
    \caption{$\EGone' = 0$, $\EGtwo' \gg 0$}
    \label{fig:I1-G2-fail}
\end{subfigure}%
\begin{subfigure}{0.1725\linewidth}
    \includegraphics[width=\linewidth]{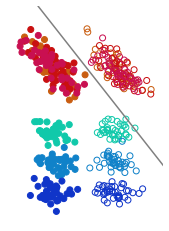}
    \caption{$\EGtwo' = 0$, $\EGthree' \gg 0$}
    \label{fig:I1-G3-fail}
\end{subfigure}%
\begin{subfigure}{0.1725\linewidth}
    \includegraphics[width=\linewidth]{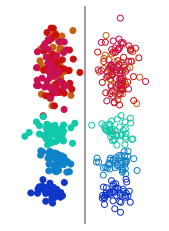}
    \caption{$\EGthree' = 0$}
    \label{fig:I1-fail-G3-success}
\end{subfigure}

\begin{subfigure}{0.1825\linewidth}
    \includegraphics[width=\linewidth]{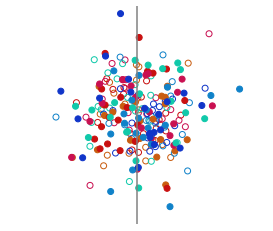}
    \caption{$\EGzero' \gg 0$}
    \label{fig:I2-G0-fail}
\end{subfigure}
\begin{subfigure}{0.1825\linewidth}
    \includegraphics[width=\linewidth]{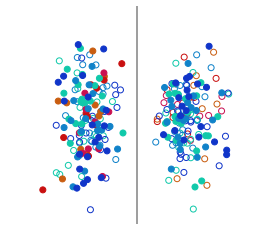}
    \caption{$\EGzero' = 0$, $\EGone' \gg 0$}
    \label{fig:I2-G1-fail}
\end{subfigure}
\begin{subfigure}{0.1825\linewidth}
    \includegraphics[width=\linewidth]{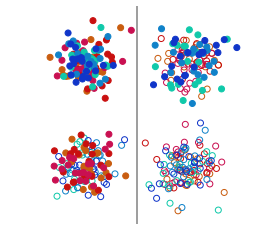}
    \caption{$\EGone' = 0$, $\EGtwo' \gg 0$}
    \label{fig:I2-G2-fail}
\end{subfigure}
\begin{subfigure}{0.1825\linewidth}
    \includegraphics[width=\linewidth]{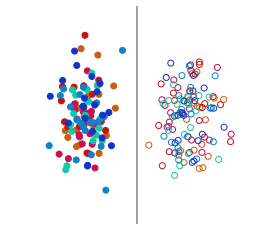}
    \caption{$\EGthree' = 0$, $\DItwo' = 0$}
    \label{fig:I2-success-G3-success}
\end{subfigure}

\caption{Failure modes of domain generalization algorithms demonstrated by 2D representations of a two-class data. The first row corresponds to training domain invariance ($\DIzero' = 0$), but  $\DIone' \gg 0$. 
The second row corresponds to training-test domain invariance ($\DIone' = 0$).
In particular, the first three images demonstrate that $\DIone'=0$ can co-occur with large  $\EGzero$, $\EGone$ and  $\EGtwo$. The right-most image demonstrates that $\EGzero'=0$ and $\DItwo'=0$ together imply $\EGthree'=0$.
Failure modes are described in \cref{sec:failuremodes}.}
\label{fig:invariance-failures_i1}
\end{figure*}

\subsection{Generalization metrics}
Below we define four evaluation metrics that capture qualitatively different aspects of generalization.
All of this metrics will be some kind of errors (so lower better). Hence, we will use terms ``metric'' and ``error'' interchangeably.

\textbf{Training set underfitting.} 
How well does the model perform on training domains? Formally, this metric is denoted by $\EGzero'$ and is defined as follows:
\begin{equation*}
\EGzero' \triangleq \frac{1}{n_1} \sum_{i=1}^{n_1} \E_{X^1_i, Y^1_i}\sbr{ \ell\rbr{f_w(h_\theta(X^1_i)), Y^1_i}},
\end{equation*}
where $h_\theta$ and $f_w$ are the learned feature extractor and  classifier, respectively.
This metric is expected to be small for most domain generalization algorithms.
A model might have large $\EGzero'$ if the regularization terms of its objective were so strong compared to the classifier loss that the feature extractor failed to learn anything useful. 
An example will be discussed in the Experiments section.
Training set underfitting is demonstrated in \cref{fig:G0-fail}.

\textbf{Test set inseparability.}
How well are the representations of the test domains separable with respect to the chosen class of classifier heads?
Formally, we define test set inseparability error as follows:
\begin{align*}
\EGone' &\triangleq\inf_{w' \in \mathcal{W}}\cbr{\frac{1}{n_3}\sum_{i=1}^{n_3} \E_{X^3_i, Y^3_i}\sbr{ \ell\rbr{f_{w'}(h_\theta(X^3_i)), Y^3_i} }}.
\end{align*}
This metric will be large for models whose feature extractor is overfitted on the training domains and does not produce a reasonable representation for the test domains.
Note that the quality of the representation is measured with respect to the class of classifier heads (e.g., linear functions), because we are not interested in cases when the representations have information about domains but that information is not decodable/usable by the considered family of classifiers.
The case when $\EGzero'=0$ and $\EGone'$ is large is demonstrated in \cref{fig:G1-fail}.

\textbf{Training-test misalignment.} 
Is there a common classifier for the representations of training and test domains?
The extent of the answer to this question being negative is measured by the $\EGtwo'$ metric:

\begin{align*}
    \EGtwo' &\triangleq \frac{1}{n_3}\sum_{i=1}^{n_3} \E_{X^{3}_i,Y^{3}_i}\sbr{\ell(f_{\tilde{w}}(h_\theta(X^{3}_i)), Y^{3}_i)},\\
    \text{ where } \tilde{w} &\in\arg\min_{\tilde{w} \in \mathcal{W}} \E_{X^{1,3},Y^{1,3}}\sbr{\ell(f_{\tilde{w}}(h_\theta(X^{1,3})), Y^{1,3})}.
\end{align*}

If $\EGone'$ is small, the representations of the samples from each domain are good enough to be separated by a domain-specific classifier. 
But it might be impossible to find a classifier that would separate samples from all domains (both training and test) at once, resulting in relatively high $\EGtwo'$ error.
This case is demonstrated in \cref{fig:G2-fail}.
Such scenarios can be thought of as ``milder'' versions of overfitting, as the feature extractor learned useful and decodable information, but was not able to distribute the representations in a consistent way across domains.

\textbf{Classifier non-invariance.} This final generalization metric is the standard test error and measures the performance of the learned model on test domains.
Formally, the metric $\EGthree'$ is defined as follows:
\begin{equation*}
\EGthree' \triangleq \frac{1}{n_3}\sum_{i=1}^{n_3}\E_{X^3_i,Y^3_i}\sbr{\ell(f_w(h_\theta(X^3_i)),Y^3_i)}.
\end{equation*}
If $\EGtwo' = 0$, the representations are so good that there exists a classifier that can separate samples from both training and test domains with significant success.
In such cases $\EGthree'$ essentially measures whether the training algorithm was able to find a classifier that works for both training and test domains.
Hence, the name of this metric $\EGthree'$ is ``classifier non-invariance''.
Note that a training algorithm might easily fail to select an invariant classifier, as it does not have access to the test data during training. This scenario is schematically demonstrated in \cref{fig:G3-fail}.

Note that the four failure modes defined above and depicted in \cref{fig:failures} are extreme cases and should not be thought as a comprehensive list of cases.
In \cref{sec:decompositions} we propose a way of attributing the model's overall test error to each of these failure modes.

\subsection{Invariance metrics}
Many domain generalization algorithms attempt to learn domain-invariant representations. 
The intuition is that if the representations of the samples are similar across all domains, then a classifier designed for the training domains will generalize to the test domains.
As the algorithms have access only to the training data, they usually achieve domain invariance across the training domains, but not across the union of training and test domains.

Motivated by the generalization metrics defined above, we introduce metrics that assess qualitatively different aspects of domain invariance of learned representation.
While there are many ways to measure the extent of invariance of representations across two or more domains (e.g., using formal distances or divergences between probability distributions), we choose to use a technique similar to the one used in generalization metrics above.
Informally, we will measure how invariant are the representations of samples of two domains by measuring how well one can differentiate them using a domain classifier.
Importantly, we will draw domain classifier from the same family of functions as label classifiers.
That is, if the label classifier head uses a specific architecture, we would use the same architecture (with a different number of outputs) for domain classifiers.
This intentionally ignores domain information in representations that cannot be decoded by the label classifier.
Domain classifiers will be functions of the form $g_\omega: \mathbb{R}^d \rightarrow \mathbb{R}^K$, $\omega \in \Omega$, 
where $K$ is the number of domains.

\textbf{Training domain distinguishability.}
Are the representations of examples from the training domains domain-invariant?
Formally, we denote our first invariance metric $\DIzero'$ and define it the following way:
\begin{align*}
\small
\DIzero' &\triangleq
1 - \inf_{\omega \in \Omega} \sbr{\frac{1}{n_1} \sum_{i=1}^{n_1}{ \E_{X^1_i}\sbr{\ell(g_{\omega}(h_\theta(X^1_i)), i)}}} - \frac{1}{n_1}.
\end{align*}
Here the constant $\frac{1}{n}$ is the accuracy of the random classifier.
Note that lower values of $\DIzero'$ correspond to higher invariance, and $\DIzero'=0$ implies it is impossible to distinguish domains better than the trivial baseline.
Large values of $\DIzero'$ can arise when the regularizer designed to induce invariance is too weak compared to the classification loss. 
In practice, most algorithms do not end up in this state if the regularization term is tuned correctly.
It is important to note that achieving domain invariance on the training domains is not necessary to have domain generalization, as demonstrated in \cref{fig:G3-success}.

\textbf{Training-test domain distinguishability.}
Are the representations of examples from the union of the training and test domains domain-invariant?
Our second invariance metric $\DIone'$ measures the extent of this question having a positive answer:
\begin{align*}
\small
\DIone' &\triangleq 1 - \inf_{\omega \in \Omega}\E_{X^1_{1:n_1},X^3_{1:n_3}}\left[\frac{1}{n_1 + n_3}\left(\sum_{i=1}^{n_1} \ell(g_{\omega}(h_\theta(X^1_i)), i) \right.\right.\\
&\quad+\left.\left.\sum_{i=1}^{n_3}\ell(g_{\omega}(h_\theta(X^3_i)), i + n_1)\right)\right] - \frac{1}{n_1 + n_3}.
\end{align*}
Assuming that $\DIzero = 0'$, this metric $\DIone'$ can be large if the distributions of the representations of the training and test sets do not coincide; if there is no domain invariance across the test domains; or both.
In theory, this failure of reaching invariance can coincide with all failure modes of generalization, as shown in \cref{fig:invariance-failures_i1}.
Moreover, it is possible to have domain generalization along with large $\DIone'$ (\cref{fig:I1-fail-G3-success}).




Even when a model achieves high training-test domain invariance (low $\DIone'$), it is still possible that representations of Class 1 samples of the first domain coincide with the representations of Class 2 samples of the second domain and vice versa. 
In this case, there will be no domain generalization.
The next metric is designed to capture such situations.

\textbf{Training-test class-conditional domain invariance.} Are the representations of samples belonging to each of the classes domain-invariant?
Let $E_y$ denote the event $(Y^1_1=y \wedge \cdots \wedge Y^1_{n_1}=y \wedge Y^3_1=y \wedge \cdots \wedge Y^3_{n_3}=y)$.
We define the $\DItwo'$ metric as follows:
\begin{align*}
\small
\DItwo' &\triangleq 1- \frac{1}{C}\sum_{y=1}^C \inf_{\omega \in \Omega}\E\left[\frac{1}{n_1 + n_3}\left(\sum_{i=1}^{n_1} \ell(g_{\omega}(h_\theta(X^1_i)), i) \right.\right.\\ &\quad+\left.\left.\sum_{i=1}^{n_3}\ell(g_{\omega}(h_\theta(X^3_i)), i + n_1)\right) \bigg\lvert E_y\right] -  \frac{1}{n_1 + n_3}.
\end{align*}
This $\DItwo'$ metric can be seen as the conditional version of the previous metric $\DIone'$.

\subsection{Relations between domain invariance and generalization metrics}\label{sec:propositions}
We prove two propositions that establish connections between generalization and invariance failures.
In particular, these propositions rule out some combinations of invariance and generalization failures.
We first formally define domain invariance of representations and class-conditional domain invariance of representations (not to be confused with the corresponding invariance metrics).

\begin{definition}[Domain invariance of representations]
Let $\mathscr{D}$ be a family of domains. We say that the representations learned by a feature extractor $z=h(x)$ are domain-invariant with respect to $\mathscr{D}$ if for any two domains $p_1(x,y)$ and $p_2(x,y)$ from $\mathscr{D}$, the distributions of representations are equal, i.e., $\forall z, p_{Z_1}(z) = p_{Z_2}(z)$, where $X_1 \sim p_1(x), X_2 \sim p_2(x), Z_1 = h(X_1)$ and $Z_2=h(X_2)$.
\end{definition}
Likewise, we define invariance of representations conditioned on labels by requiring $p(z|y)$ to be the same for across all domains of the family $\mathscr{D}$.

\begin{proposition}
If $\EGtwo' = 0$, then domain invariance of representations w.r.t. the union of training and testing domains implies class-conditioned domain invariance w.r.t. the union of training and testing domains.
\label{prop:one}
\end{proposition}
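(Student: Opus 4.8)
The plan is to convert the hypothesis $\EGtwo'=0$ into a single classifier that ``decodes'' the class label from the representation on \emph{every} domain simultaneously, and then to play that decoding off against plain domain invariance. Concretely, $\EGtwo'=0$ should be read — as the surrounding text motivates it — as the existence of a head $\tilde w \in \mathcal{W}$ that is error-free (under the chosen loss) on all training and all test domains; equivalently, the minimizer of the pooled (train-plus-test) risk has zero pooled error. First I would fix such a $\tilde w$ and form its decision cells $R_k \triangleq \cbr{z \in \mathbb{R}^d : \text{argmax}_j\, \rbr{f_{\tilde w}(z)}_j = k}$ for $k = 1,\dots,C$; after fixing a measurable tie-breaking rule these are measurable and partition $\mathbb{R}^d$. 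The content of ``$\tilde w$ is error-free on domain $c$'' is then exactly that, writing $Z_c \triangleq h(X_c)$, the conditional law $p_{Z_c \mid Y_c = y}$ is supported on the single cell $R_y$, for every class $y$ and every training or test domain $c$.

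Next I would invoke the no-label-shift assumption and the domain-invariance hypothesis. Let $\pi(y)$ be the common value of $P(Y_c = y)$ (positive, as is implicit). Fix a class $y$ and any two domains $c, c'$ drawn from the union of the training and test domains. For any measurable $A \subseteq R_y$, decomposing the representation marginal over classes and using that $p_{Z_c \mid Y_c = y'}$ charges nothing inside $R_y$ when $y' \neq y$ gives $p_{Z_c}(A) = \pi(y)\, p_{Z_c \mid Y_c = y}(A)$, and the identical statement for $c'$. Domain invariance says $p_{Z_c} = p_{Z_{c'}}$, so restricting it to such $A$ and cancelling $\pi(y) > 0$ yields $p_{Z_c \mid Y_c = y}(A) = p_{Z_{c'} \mid Y_{c'} = y}(A)$ for every $A \subseteq R_y$; since both conditionals are supported on $R_y$, they agree on all measurable sets, i.e. $p_{Z_c \mid Y_c = y} = p_{Z_{c'} \mid Y_{c'} = y}$. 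Ranging over $y$ and over domain pairs gives class-conditioned domain invariance with respect to the union of the training and test domains, which is the claim.

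The bookkeeping — measurability of the cells, argmax ties, upgrading agreement on subsets of $R_y$ to agreement of the full laws — I would dispatch in a line or two. The genuine obstacle is the opening move: one really does need $\EGtwo'=0$ in the strong form ``a common classifier is error-free on all training \emph{and} test domains'', since the cancellation above requires each cell $R_y$ to coincide (up to null sets) with the class-$y$ sub-population of \emph{every} domain, the training ones included; error-freeness on the test domains alone does not pin down the training-domain conditionals, and the implication can in fact fail otherwise. It is also worth stating where no-label-shift enters: it is what makes the prefactor in $p_{Z_c}(\cdot) = \pi(y)\, p_{Z_c \mid Y_c = y}(\cdot)$ the same across domains so that it can be cancelled — alternatively, taking $A = R_y$ above shows that domain invariance together with the support condition already forces $P(Y_c = y)$ to be domain-independent.
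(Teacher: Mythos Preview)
Your proof is correct and follows essentially the same route as the paper: both read $\EGtwo'=0$ as the existence of a single classifier that is error-free on every training and test domain, so that the label is a deterministic function of the representation, and then combine this with domain invariance of $p(z)$ to conclude that $p(z\mid y)$ is domain-invariant. The paper carries this out via Bayes' rule on densities, $p(z\mid y)\propto p(z)\,p(y\mid z)$ with $p(y\mid z)=\delta_y(\text{argmax}_k f_{\tilde w}(z)_k)$ common to all domains, whereas you phrase the same computation in terms of decision cells and measures; your version is a bit more careful about measurability and about where no-label-shift enters (and that it is in fact forced), but the underlying argument is identical.
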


\begin{proposition}
If $\EGzero'=0$ and representations are class-conditioned domain-invariant w.r.t. the union of training and testing domains, then $\EGthree' = 0$.
\label{prop:two}
\end{proposition}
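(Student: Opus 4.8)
The plan is to argue directly in representation space using the \emph{learned} classifier head. Fix the trained feature extractor $h_\theta$ and classifier $f_w$, and for each class $y \in \mathcal{Y}$ define the misclassification region $A_y \triangleq \cbr{z \in \mathbb{R}^d : \argmax_k f_w(z)_k \neq y}$, adopting a fixed deterministic tie-breaking convention so that $A_y$ is well-defined and measurable. Since $\ell$ is the $0$-$1$ loss, for any domain $p(x,y)$ we have $\E_{X,Y}\sbr{\ell(f_w(h_\theta(X)), Y)} = \sum_{y=1}^{C} p(Y = y)\,\Pr\rbr{h_\theta(X) \in A_y \given Y = y}$, the conditional probability being taken under that domain's law. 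Applying this to every training and test domain rewrites $\EGzero'$ and $\EGthree'$ as averages over those domains of sums of this form.

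First I would exploit $\EGzero' = 0$. As $\EGzero'$ is an average of finitely many nonnegative quantities, every training-domain term vanishes; and since each such term is itself a nonnegative combination $\sum_y p(y)\,\Pr(h_\theta(X^1_i) \in A_y \given Y^1_i = y)$, every summand with $p(y) > 0$ must be zero. By the no-label-shift assumption the prior $p(y)$ is common to all domains, so the set of classes with positive probability is shared. Hence for every training domain $i$ and every class $y$ with $p(y) > 0$ we get $\Pr(h_\theta(X^1_i) \in A_y \given Y^1_i = y) = 0$.

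Next I would transfer this to the test domains via class-conditional domain invariance. By hypothesis, for each class $y$ the push-forward law of $h_\theta(X)$ given $Y = y$ coincides across the union of training and test domains; in particular, for any test domain $i'$ and training domain $1$, these two conditional laws on $\mathbb{R}^d$ agree, so they assign equal measure to the measurable set $A_y$. Therefore $\Pr(h_\theta(X^3_{i'}) \in A_y \given Y^3_{i'} = y) = \Pr(h_\theta(X^1_1) \in A_y \given Y^1_1 = y) = 0$ for every class $y$ with $p(y) > 0$, while classes with $p(y) = 0$ contribute nothing. Summing over $y$ and averaging over the $n_3$ test domains yields $\EGthree' = 0$. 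Notice the argument uses neither $\EGone'$ nor $\EGtwo'$; what makes it work is the no-label-shift assumption together with the fact that invariance is stated with respect to the \emph{learned} representation $h_\theta$.

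I expect the only delicate points to be bookkeeping rather than substance: checking that $A_y$ is measurable (immediate for the usual continuous or piecewise-linear heads, modulo the tie set, which the fixed tie-breaking convention disposes of) and being careful in the step ``$\E[\ell] = 0$ with $\ell \ge 0$ forces the misclassification event to have probability zero.'' Everything else is a short chain of equalities, so I do not anticipate a genuine obstacle.
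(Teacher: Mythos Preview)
Your proof is correct and follows essentially the same approach as the paper's: both arguments fix the learned classifier $f_w$, use $\EGzero'=0$ to conclude that the per-class misclassification probability on training domains is zero, and then transfer this to each test domain via the equality of class-conditional representation laws $p(z\mid y)$. Your version is slightly more explicit about measurability, tie-breaking, and the role of the no-label-shift assumption, but the logical skeleton is identical to the paper's.
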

The second proposition implies that for a class-conditioned domain-invariant model, perfect performance on the training domains is sufficient for domain generalization.

\subsection{Decomposition of errors}\label{sec:decompositions}
The generalization metrics defined above are sequential in the sense that if $e'_i$ is large then $e'_{i+1}$ is likely to be large as well.
For this reason, the differences $e'_{i+1}-e'_{i},\ i=0,1,2$, are more suitable quantities for analysis purposes.
In fact, the failure modes depicted in \cref{fig:failures} are cases when one of these differences is large in conjunction with the previous metric being close to zero.
Following this reasoning, we define $\EGzero \triangleq \EGzero'$, $\EGone \triangleq \EGone' - \EGzero'$, $\EGtwo \triangleq \EGtwo' - \EGone'$, $\EGthree \triangleq \EGthree' - \EGtwo'$, and decompose the error $e\triangleq \EGthree'$ on the test domains into four components:
\begin{align*}
    e = &\underbrace{\EGzero}_{\substack{\text{training set} \text{ underfitting}}} 
    + \underbrace{\EGone}_{\substack{\text{test set} \text{ inseparability}}}\\
    &\quad+ \underbrace{\EGtwo}_{\substack{\text{training-test} \text{ misalignment}}} 
    + \underbrace{\EGthree}_{\substack{\text{classifier} \text{ non-invariance}}}.\numberthis
\end{align*}
Each of these components can be interpreted as the individual contribution of the corresponding failure mode to the error of the model.
This decomposition is the most meaningful when the its components are nonnegative.
In general, $\EGone$, $\EGtwo$, and $\EGthree$ can be negative, for example when samples of testing domains are significantly easier to classify compared to those of training domains.
However, this is a rare phenomenon and has not been observed in our experiments.
Moreover, one can prove that on average (w.r.t. to the training-test splits of the domains) each $e_i,\ i=0,1,2,3$ is nonnegative.
We give the precise formulation of this statement along with its proof in \cref{sec:decomposition-statements}. 

Similarly, domain-distinguishability $d \triangleq \DItwo'$ can be decomposed into three components:
\begin{align*}
    d & = \underbrace{\DIzero}_{\substack{\text{training} \ \text{domain} \ \text{distinguishability}}} 
    + \underbrace{\DIone}_{\substack{\text{training-test} \ \text{domain} \ \text{distinguishability}}}\\
    &\quad+ \underbrace{\DItwo}_{\substack{\text{training-test} \ \text{class-conditional} \ \text{domain distinguishability}}},\numberthis
\end{align*}
where $\DIzero \triangleq \DIzero'$, $\DIone \triangleq \DIone' - \DIzero'$, and $\DItwo \triangleq \DItwo' - \DIone'$. Again, each of these components can be interpreted as the individual contribution of the corresponding failure mode.
Some components of this decomposition also can be negative in rare situations but are nonnegative if we consider averaging over training-test splits of the domains (see \cref{sec:decomposition-statements}).

\section{Experiments}\label{sec:experiments}
\subsection{Datasets and algorithms}
The number of datasets suitable for testing domain generalization algorithms is not large. An attempt to collect them under a unified format was done in \cite{domainbed}. Two of the seven datasets are based on MNIST digits, four others are simple unions of unrelated datasets with similar labels, and only one is realistic: TerraIncognita. This latter one has a serious label shift between domains. 
Recently proposed WILDS benchmark \cite{koh2020wilds} has another seven datasets which are connected to real-world problems. 
Only one of the seven, \textbf{Camelyon17}, is carefully designed to have no label shift. It is a patch-based variant of the larger Camelyon17 dataset \cite{camelyon} of lymph node captures. The version in the WILDS benchmark contains 450000 96x96 patches of images of cancer metastases in lymph node sections. The label for each patch is binary indicating whether the patch contains any tumor tissue. The domain of a patch is the hospital from where the image comes from. There are five different hospitals, three for training, one for validation and one for testing. Following \cite{koh2020wilds}, we use Densenet-121 \cite{densenet} for all algorithms on this dataset and train for 10 epochs.

\begin{figure*}
\centering
\begin{subfigure}{0.23\linewidth}
    \includegraphics[width=\linewidth]{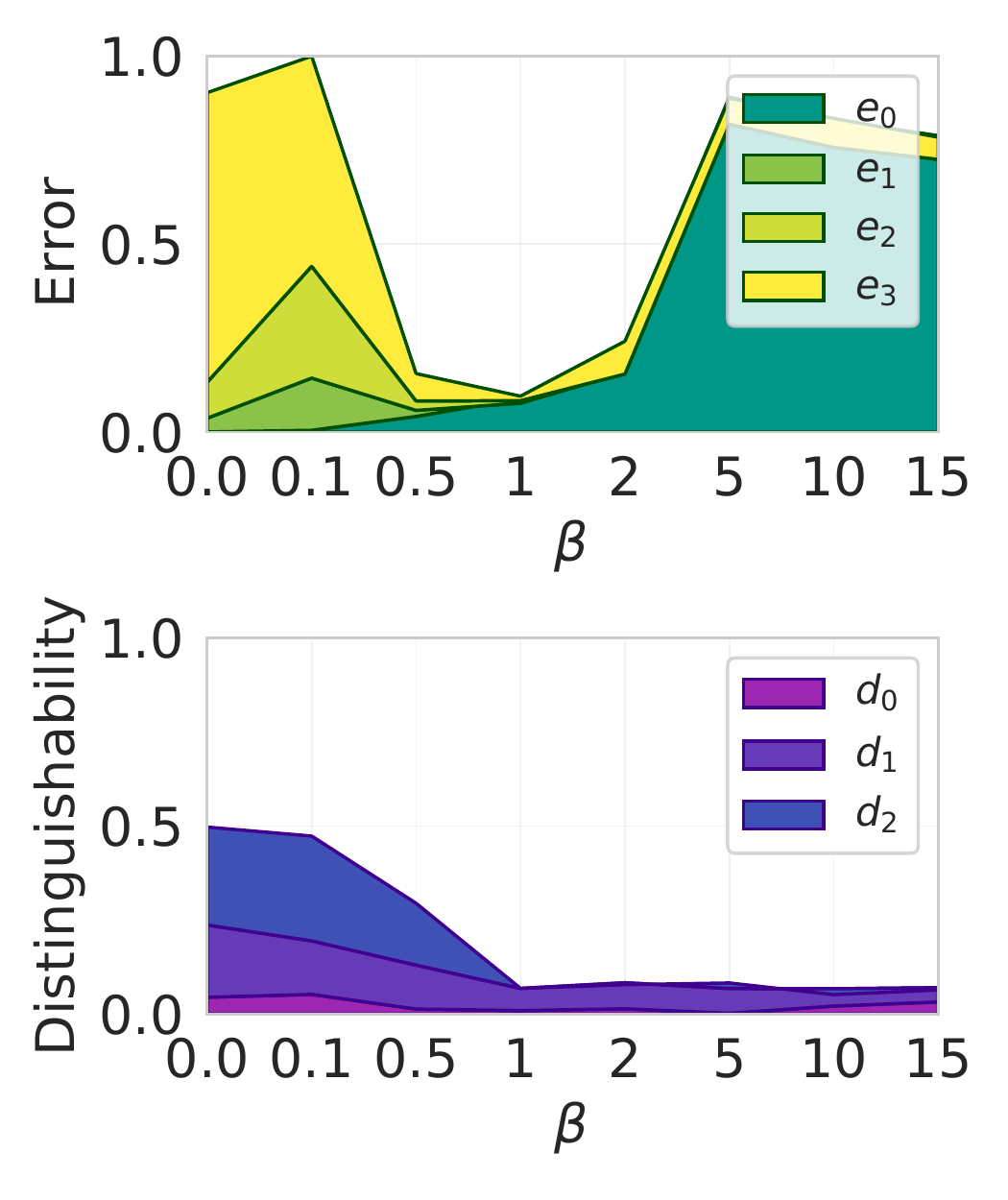}
    \caption{ERM+HSIC}
    \label{fig:CMNIST_HSIC}
\end{subfigure}
\begin{subfigure}{0.23\linewidth}
    \includegraphics[width=\linewidth]{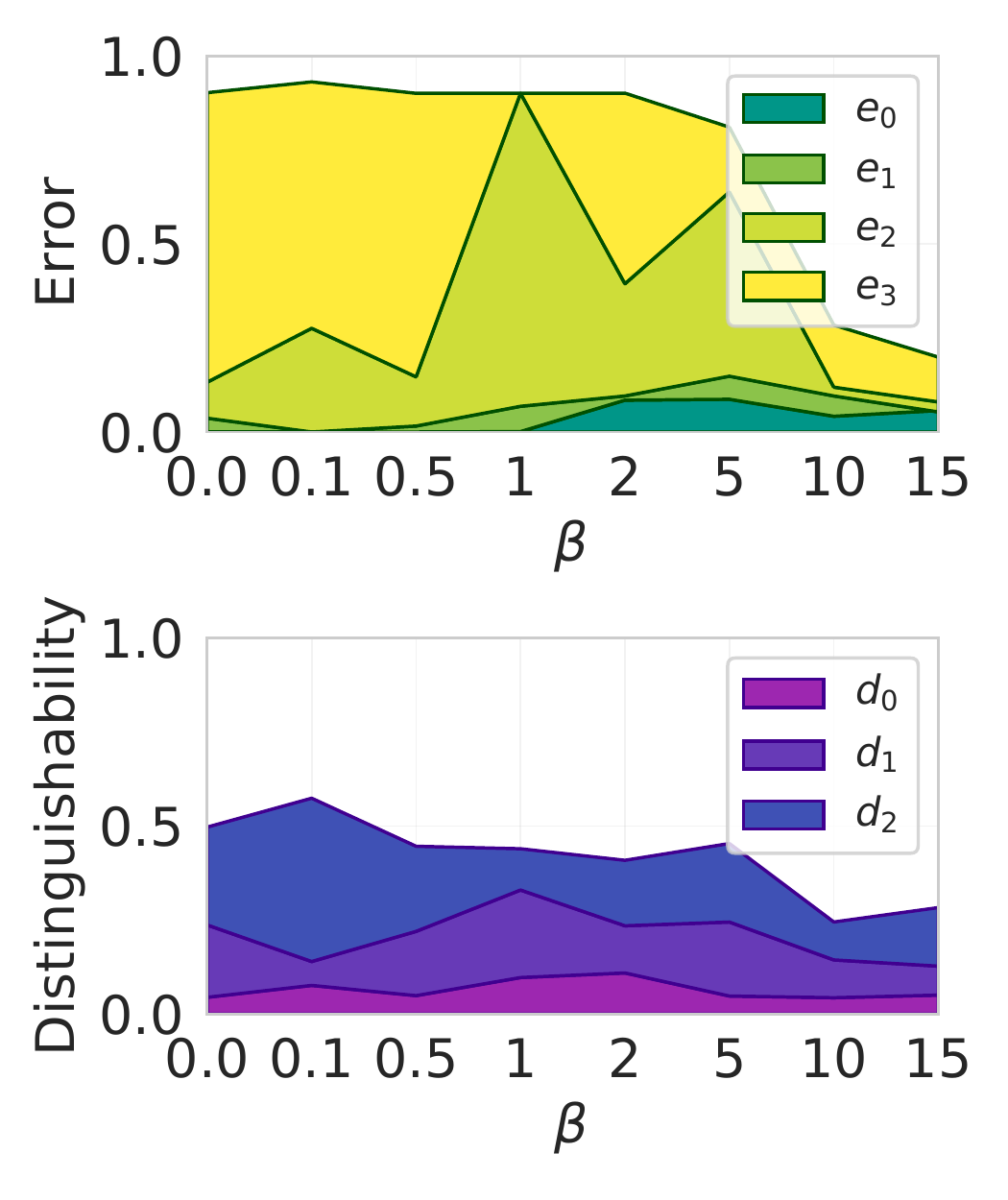}
    \caption{DeepCORAL}
    \label{fig:CMNIST_DeepCORAL}
\end{subfigure}
\begin{subfigure}{0.23\linewidth}
    \includegraphics[width=\linewidth]{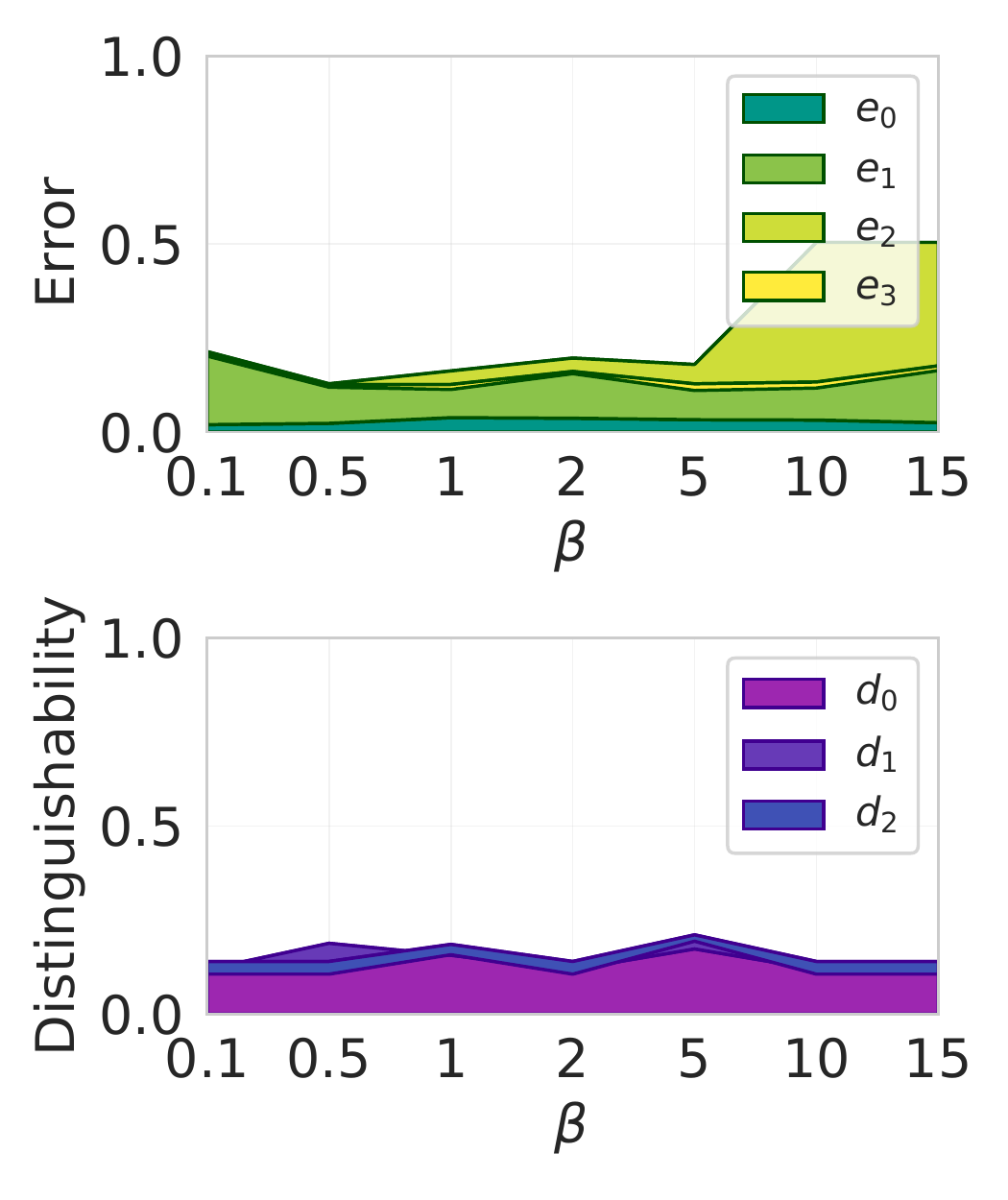}
    \caption{DANN}
    \label{fig:Camelyon_DANN}
\end{subfigure}
\begin{subfigure}{0.23\linewidth}
    \includegraphics[width=\linewidth]{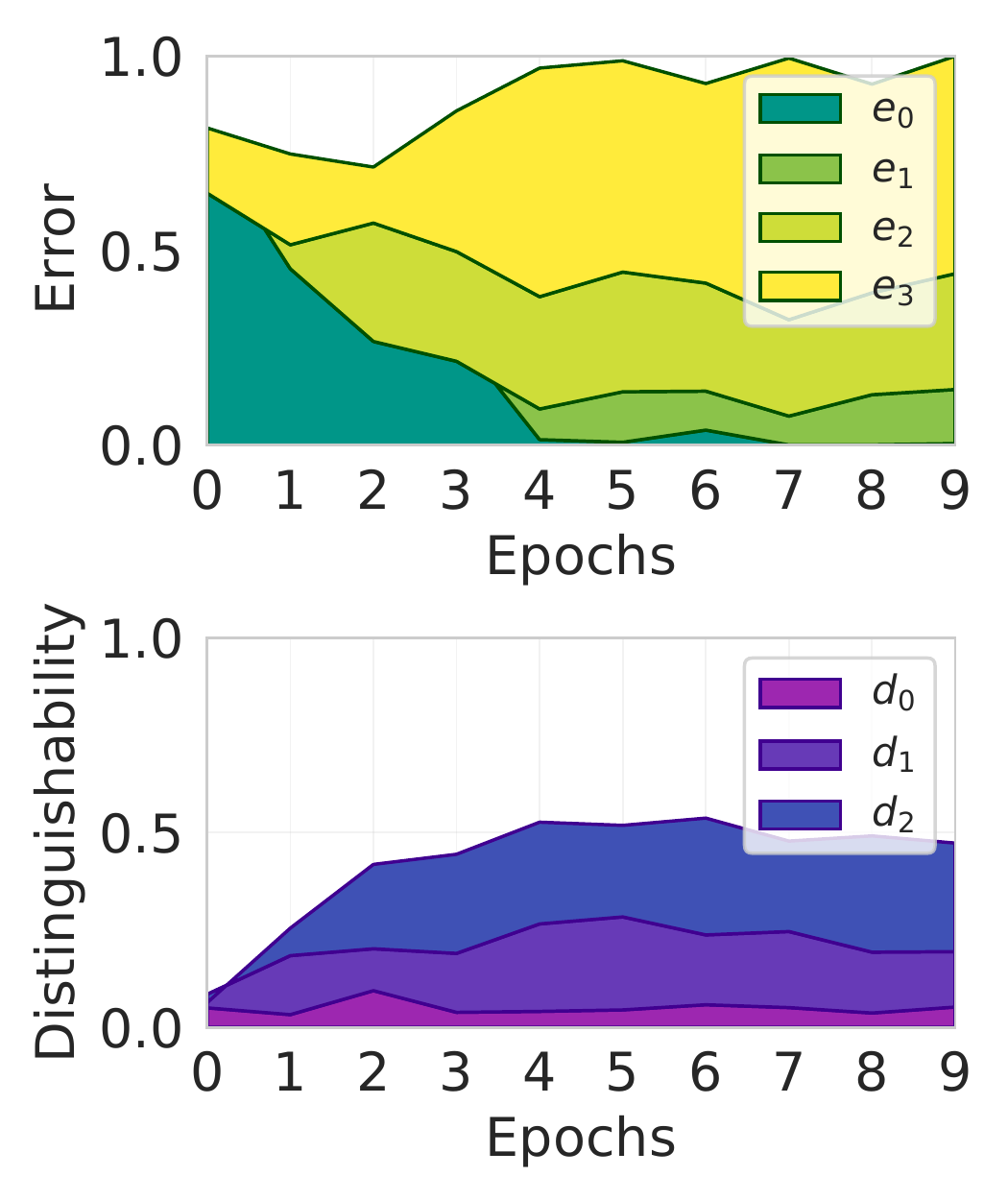}
    \caption{ERM+HSIC ($\beta=0.1$)}
    \label{fig:CMNIST_HSIC_beta2}
\end{subfigure}

\begin{subfigure}{0.23\linewidth}
    \includegraphics[width=\linewidth]{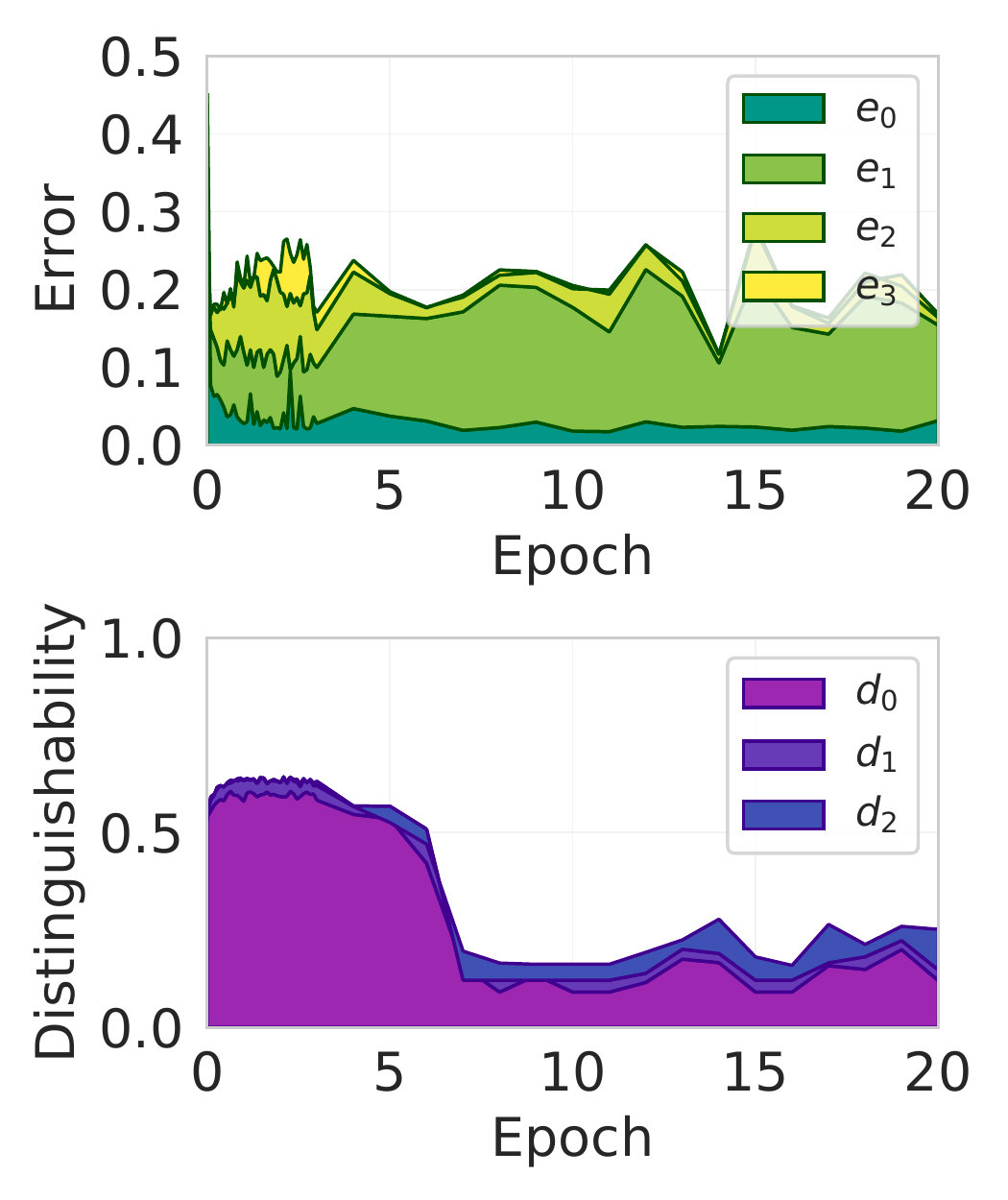}
    \caption{Random init (validation)}
    \label{fig:Camelyon17_ERM_random_val}
\end{subfigure}
\begin{subfigure}{0.23\linewidth}
    \includegraphics[width=\linewidth]{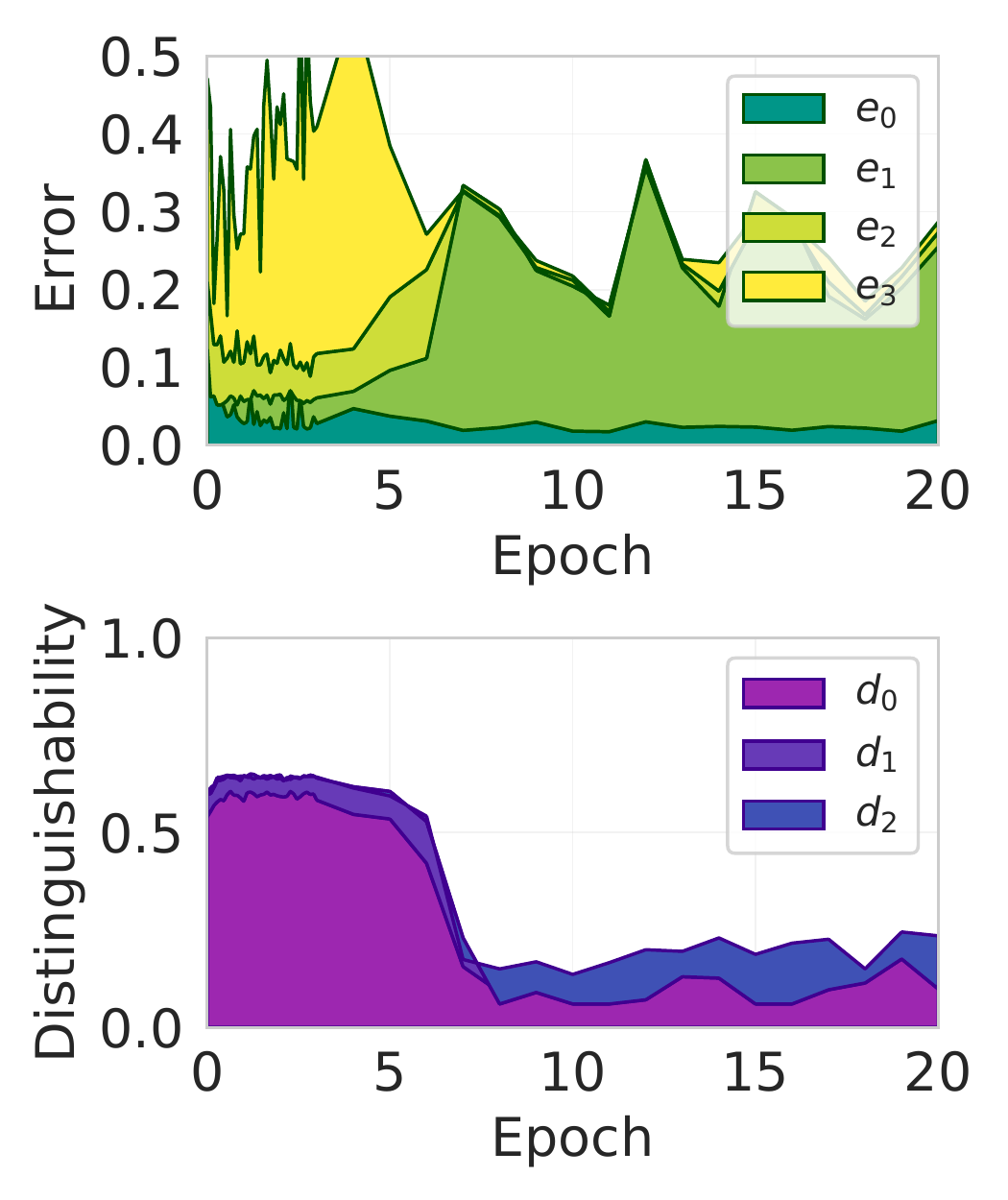}
    \caption{Random init (test)}
    \label{fig:Camelyon17_ERM_random_test}
\end{subfigure}
\begin{subfigure}{0.23\linewidth}
    \includegraphics[width=\linewidth]{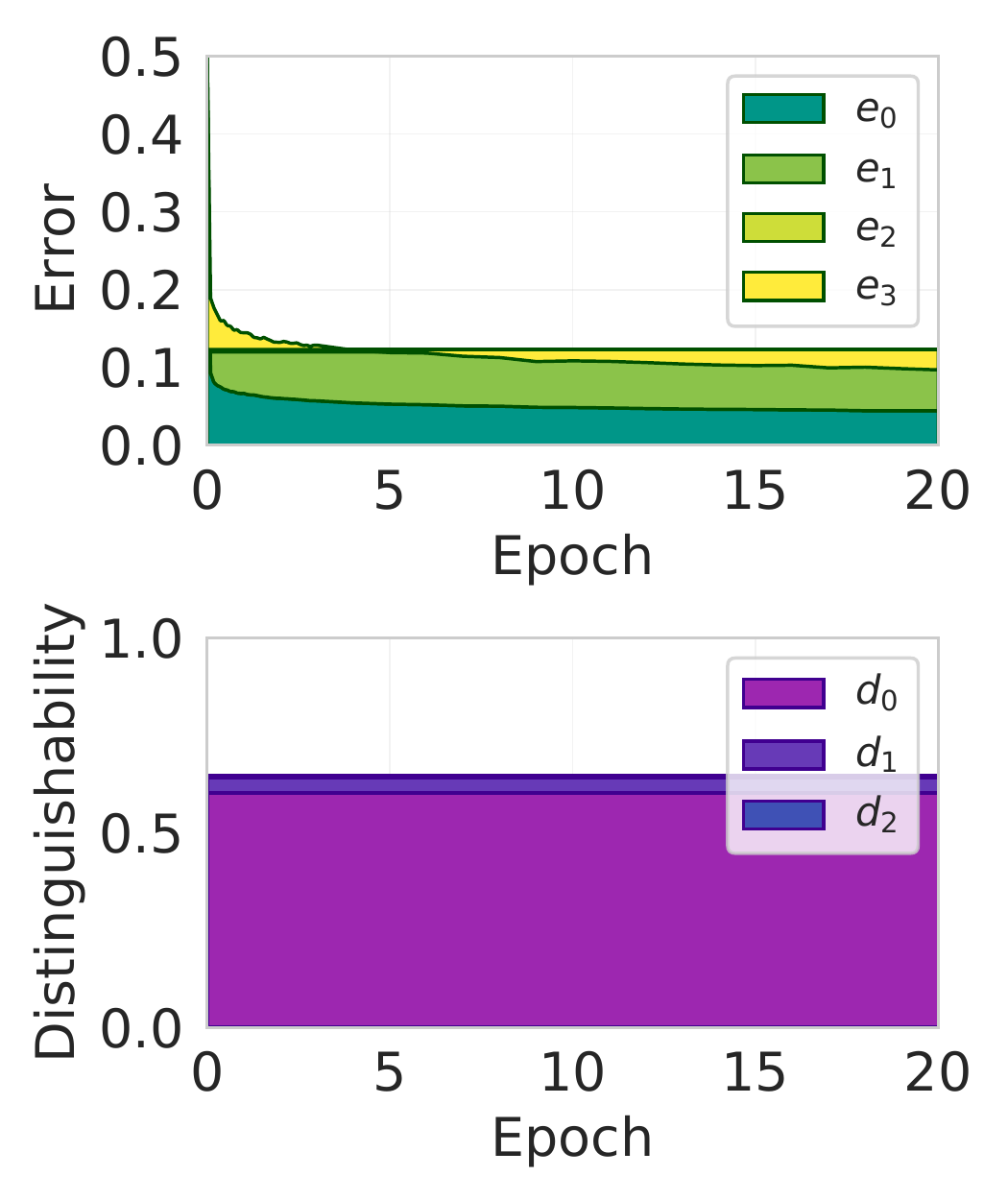}
    \caption{Frozen BYOL (validation)}
    \label{fig:Camelyon17_ERM_frozenBYOL_val}
\end{subfigure}
\begin{subfigure}{0.23\linewidth}
    \includegraphics[width=\linewidth]{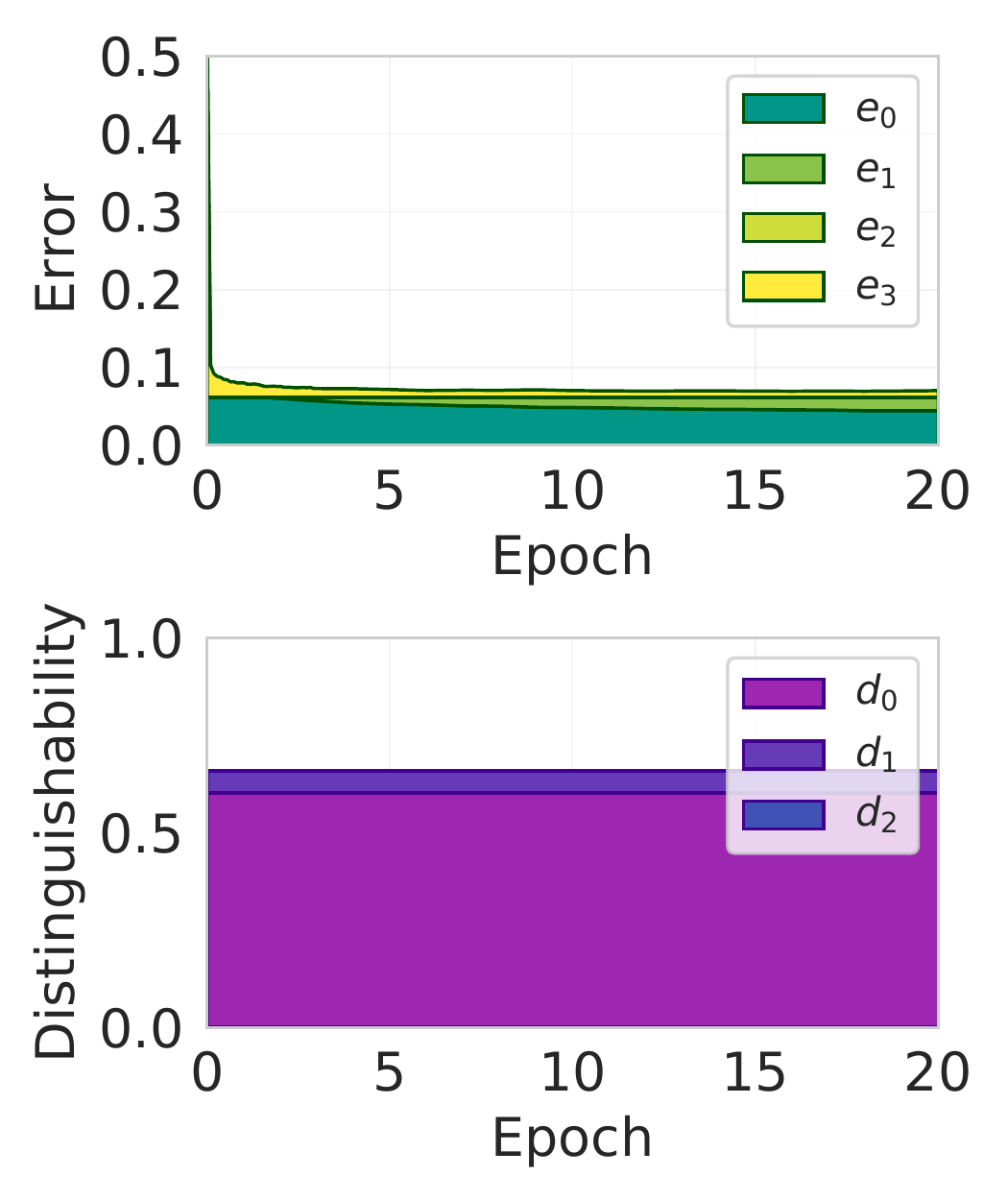}
    \caption{Frozen BYOL (test)}
    \label{fig:Camelyon17_ERM_frozenBYOL_test}
\end{subfigure}
\caption{Decomposition of generalization errors and domain-distinguishability of several models measured on the validation domains. Horizontal axis corresponds to: (a)-(f) regularization strength of the algorithms, (g) training epochs, (h) model complexity. Figures (e) and (f) show results on Camelyon17 dataset; all others are on Colored MNIST.}
\label{fig:results}
\end{figure*}

We also use a slightly modified version of \textbf{Colored MNIST} dataset from \cite{galstyan2020robust}, originally from \cite{lecun-mnist}. We changed the dataset generation process to mimic Camelyon17. Namely, we construct five domains by splitting MNIST images into five sets of equal size. For each domain we randomly fix three ``colors'', which are essentially 50 dimensional vectors, one for each digit. Then we ``colorize'' each image with the corresponding color. We end up with around 1200 images in each domain of shape 50x28x28. All models trained on this dataset use a simple neural network with a two-layer ReLU-activated convolutional feature extractor and a linear layer on top of it. All models are trained for 10 epochs. Samples from the datasets are presented in \cref{sec:app-dataset-samples}.


We analyze the following domain generalization algorithms. Empirical Risk Minimization (\textbf{ERM}) \cite{ermVapnik} is used as a baseline. It minimizes the sum of errors on all training domains and does not use domain information. \textbf{ERM+HSIC} \cite{galstyan2020robust} is a simple algorithm that attempts to induces domain invariance for training domains by adding a Hilbert-Schmidt Independence Criterion (HSIC)~\cite{gretton2009covariate} regularization term that penalizes domain information in the learned representations.
\textbf{DeepCORAL}~\cite{sun2016deep} was introduced as a domain adaptation algorithm, but was recently applied in domain generalization settings ~\cite{domainbed}. It penalizes the difference between the means and covariance matrices of representation distributions across domains. 
Invariant Risk Minimization (\textbf{IRM}) algorithm \cite{arjovsky2019invariant} tries to push representation distributions for all domains to have the same optimal classifier head. 
In \cite{gradient_starvation} the authors analyze the phenomenon of gradient starvation in algorithms based on gradient descent. They propose a new regularization method called Spectral Decoupling (\textbf{SD}). To overcome gradient starvation, it penalizes model's confidence by adding an L2 penalty on networks logits. \textbf{GroupDRO} was used by \cite{groupDRO} to tackle poor worst-group performance. It tries to increase worst-group performance by avoiding spurious correlations in the training data. By interpreting domains as groups, this algorithm becomes applicable to our setting. Another method relying on domain-invariant features is Domain-Adversarial Neural Network (\textbf{DANN}) introduced in \cite{dann}. It tries to achieve invariance by using a domain classifier (from features) and a gradient reversal layer. All of these algorithms (except ERM) have a regularization strength hyperparameter, which we denote by $\beta$. Hyperparameter ranges can be found in \cref{sec:hyperparams}.

Following literature, we consider the outputs of the last hidden layer as the learned representations $h_\theta(x)$.
Hence, label and domain classifiers $f_w(z)$ are linear (i.e., just a single fully-connected layer).

\textbf{Model selection.}
The authors of DomainBed benchmark explicitly warned against improper model selection methods in domain generalization.
They advocated that in practice model selection should be done either based on the performance on in-domain validation sets ($\mathcal{V}^1$), or by using leave-one-domain-out validation.
WILDS benchmark introduced separate validation domains and suggested to use them for model selection. In our experiments we will use this method for hyperparameter selection. To select the best checkpoint inside a single run, one can use the same validation domains. In this paper we always picked the last epoch.

\subsection{Measuring the failures}
\label{sec:tool}
To avoid accessing the test domains, we perform analysis on training and validation domains only.
The metrics defined in \cref{sec:failuremodes} contain expectations over distributions and infimums over classifiers. In practice, we approximate infimums by empirical risk minimization using the implementation of logistic regression in scikit-learn package~\cite{scikit-learn}. Note that this might fail to find the optimum, and the empirical estimates might be worse than the true values. In case of $\EGtwo'$, the estimate can be worse than the solution found by the domain generalization algorithm itself, i.e. $\EGthree' < \EGtwo'$. In fact, we encounter this phenomenon later in \cref{tab:frozenBYOL}. 
Following standard machine learning practices, we approximate distributions $p^1_i(x,y)$ and $p^2_j(x,y)$ with corresponding empirical distributions $\mathcal{T}^1_i$ and $\mathcal{T}^2_j$ during training, and with $\mathcal{V}^1_i$ and $\mathcal{V}^2_j$ when reporting the scores ($i=1,\ldots,n_1$, $j=1,\ldots,n_2$). 
In practice, to make $\DIone'$ and $\DItwo'$ comparable with $\DIzero'$, we train and evaluate domain classifiers on the union of $n_1-n_2$ training domains and $n_2$ validation domains.



\section{Results and Discussion}

Our analysis shows distinct patterns of failures on the two datasets. On \textbf{Colored MNIST}, all algorithms, including the ERM baseline ($\beta=0$), achieve domain invariance on training domains, but unseen domains remain quite distinguishable. At the same time, the representations of each domain remain separable ($e_0$ and $e_1$ are close to zero), but we have well-expressed training-test misalignment and/or classifier non-invariance across all algorithms and hyperparameters (\cref{fig:CMNIST_DeepCORAL,fig:CMNIST_SD,fig:CMNIST_GroupDRO}). The error on the validation set is usually larger than $0.8$. We observe a positive correlation ($0.4$-$0.5$) between $\DIone'$ and generalization error on the validation set ($\EGthree'$) (\cref{fig:corr_e3d1_DeepCORAL_CMNIST}).

The only exception to this pattern is seen on ERM+HSIC algorithm (\cref{fig:CMNIST_HSIC}). When the regularization strength $\beta$ approaches $1$, we achieve full invariance across all domains, training-test misalignment approaches zero and the classifier becomes invariant. The error on the test set is $0.09$ which is equal to the error on the training set. Stronger regularization makes the error on the training set larger as the representations collapse. The latter causes the correlation between $\DIone'$ and $e'_3$ to be much lower for ERM+HSIC (\cref{fig:corr_e3d1_HSIC_CMNIST}).


Another interesting phenomenon was observed while training ERM+HSIC with a lower $\beta=0.1$. Initially, the model does not fit the training domains (large $\EGzero$). Over time, $\EGzero$ is gradually transformed into $\EGthree$, $\EGtwo$ and $\EGone$ (\cref{fig:CMNIST_HSIC_beta2}). The overall error always stays above 0.7 during the training, but the composition of the error changes significantly.

On \textbf{Camelyon17}, training domain distinguishability and training-validation domain distinguishability are virtually the same for all algorithms (\cref{fig:Camelyon_DANN,fig:Camelyon_CORAL_HSIC_DRO_SD}). Nevertheless, validation set inseparability is always non-zero, which is the largest contributor to the error on unseen domains for well performing models. The magnitude of $\EGone$ varies a lot during the training and across random seeds (\cref{sec:app-more-results}). Additionally, there is no correlation between $\DIone'$ and $\EGthree'$ (\cref{fig:corr_e3d1_DeepCORAL_Camelyon,fig:corr_e3d1_GroupDRO_Camelyon}). This gives a little hope that focusing on domain invariance will help succeed on this dataset.
\begin{table}[t]
\centering
\begin{tabular}{@{}llcc@{}}
\toprule
Algorithm               & Initialization & Valid. & Test  \\ \midrule
ERM (\cref{fig:Camelyon17_ERM_random_val})                     & Random         & 0.164      & 0.287 \\
ERM                     & Frozen BYOL    & 0.138      & 0.077 \\
ERM (wd=0, \cref{fig:Camelyon17_ERM_frozenBYOL_val})              & Frozen BYOL    & 0.097      & 0.070 \\
SD ($\beta=5$)          & Frozen BYOL    & 0.111      & 0.070 \\
GroupDRO ($\beta=5$)    & Frozen BYOL    & 0.140      & 0.078 \\
IRM ($\beta=5$)         & Frozen BYOL    & 0.147      & 0.081 \\
$\EGtwo'$ (lower bound, est.) & Frozen BYOL    & 0.123      & 0.062 \\ \bottomrule
\end{tabular}
\caption{Generalization error $\EGthree'$ on validation and test domains of several algorithms trained on Camelyon17. Weight decay coefficient for all algorithms is set to $0.01$ except the third row. The last row shows the generalization error without classifier non-invariance component ($\EGtwo'$). As the representations are fixed, these values remain constant during the training and serve as a lower bound for all models with frozen representations. See \cref{sec:tool} on why the estimated lower bound is poor on the validation set.
}
\label{tab:frozenBYOL}
\end{table}

A more detailed analysis uncovers an interesting pattern. As we have only two unseen domains in Camelyon17, we look at both of them during our analysis (violating ``do not look at the test set'' rule). Many methods we tried, including the baseline, increase the invariance after the first few epochs of training. The large variance of $\EGone$ is observed only \textit{after} $\DIone$ gets small (\cref{fig:Camelyon17_ERM_random_val,fig:Camelyon17_ERM_random_test}). Instead, when the domains are well distinguishable, $\EGone$ is relatively stable (although the absolute values are different on the two domains) and is accompanied by $\EGtwo$ and $\EGthree$. In fact, on the test domain, the largest contributor to the generalization error over the first epochs is $\EGthree$. This implies that if we do not push the representations to be domain-invariant, there is a hope to get better generalization on unseen domains by focusing on the classifier. Unfortunately, even the basic ERM, without additional loss terms, converges towards training-domain invariance and harms test set separability.

To verify that focusing on the classifier can be beneficial, we take a ResNet-50 pretrained on ImageNet using BYOL algorithm~\cite{BYOL} and fix the representations. At this point, the representations of different domains are well distinguishable (similar to the first epochs of a regular training), $\EGone'$ 
for the validation and test domains are $0.123$ and $0.062$ respectively. 
This implies that the test domain is more similar to the training domains than the validation domain in BYOL's space. As we freeze the representations, these numbers serve as lower bounds for the generalization error, and the training can affect only $\EGzero$ and $\EGthree$. Experiments show that the ERM baseline with no weight decay (\cref{fig:Camelyon17_ERM_frozenBYOL_val,fig:Camelyon17_ERM_frozenBYOL_test}), as well as SD algorithm with a large $\beta$, can indeed decrease $\EGthree$ and approach the lower bound (\cref{tab:frozenBYOL}). GroupDRO and IRM fail to decrease $\EGthree$, while DANN and ERM+HSIC are not applicable to the setup with fixed representations. This result indicates that avoiding even a little collapse of representations, which in these cases coincides with increased domain invariance, opens new opportunities to decrease the generalization error by targeting just $\EGthree$. The causal link between poor representations of unseen domains and domain invariance is not clear.

\textbf{Model selection.}
To verify whether model selection using validation domains is suitable, we compute the correlation between accuracy of the models on validation and test domains at each epoch for a given dataset/algorithm pair. On Colored MNIST we get $0.98$, $0.8$, $0.73$ and $0.31$ correlations for ERM+HSIC, DeepCORAL, SD and DANN, respectively. On Camelyon17 basically no correlation is observed, which is similar to the findings in \cite{accuracyontheline}. 
Corresponding plots are shown in \cref{sec:appcorrelation}.

\textbf{Model complexity.}
To test the dependence of generalization failures on model complexity, we followed \cite{deep-double-descent} to train several ResNet 18K models ($k = 2,4,8,16,32,64$) with ERM+HSIC objective on Colored MNIST ($\beta=1$). As seen on \cref{fig:CMNIST_HSIC_resnetk}, larger models better fit the training set, but get worse training-test domain invariance ($\DIone'$). 




\section{Conclusion}
In this paper we have developed tools to examine failure modes of domain generalization algorithms. We have identified four components of a generalization error on unseen domains and three components of domain distinguishability. We have identified two distinct patterns for failures. In one case, most algorithms achieved domain invariance on training domains but not on unseen domains which resulted in large generalization errors. In the other case, we showed that increased domain invariance coincides with degradation of learned representations on unseen domains. We showed evidence that without achieving domain invariance it is possible to target one of the failure modes and improve generalization even with the basic ERM algorithm. We hope these findings will catalyze future work on the development of algorithms with better domain generalization capabilities.

{\small
\bibliographystyle{ieee_fullname}
\bibliography{main}
}

\appendix
\clearpage
\onecolumn
\section{Proofs}\label{sec:proofs}
This appendix includes the proofs of Propositions \ref{prop:one} and \ref{prop:two}.

\begin{proof}[Proof of Proposition \ref{prop:one}]
Let us fix a label value $y \in \mathcal{Y}$ and two distinct two distinct domains $p_1(x,y)$ and $p_2(x,y)$ from the union of test and training domains. Let $X_1 \sim p_1(x,y), X_2 \sim p_2(x,y), Z_1 = h(X_1),$ and $Z_2 = h(X_2)$. Let $w$ be the parameters of the common classifier for all training and test domains (i.e., the classifier that makes $\EGtwo' = 0$).
Then for all $z$,
\begin{align}
    p_{1}(z | y) &= \frac{p_1(z) p_1(y | z)}{\int p_1(z') p_1(y | z')dz'}\\
    &= \frac{p_2(z) p_1(y | z)}{\int p_2(z') p_1(y | z')dz'} && \text{\hspace{-2.5cm}(by domain inv. of $p(z)$)}\\
    &= \frac{p_2(z) \delta_y\left(\text{argmax}_k f_w(z)_k\right)}{\int p_2(z') \delta_y\left(\text{argmax}_k f_w(z')_k\right)dz'}\\
    &= \frac{p_2(z) p_2(y | z)}{\int p_2(z') p_2(y | z')dz'} = p_2(z|y).
\end{align}
Above $\delta_y(y')$ is the Krokecker's delta function and the last two transitions use the fact the classifier with parameters $w$ correctly classifies representations of both domains.
\end{proof}

\begin{proof}[Proof of Proposition \ref{prop:two}]
Let $X_1 \sim p^1_1(x,y), Z_1 = h(X_1)$ be a random sample from the first training domain and $X_i \sim p^3_i(x,y), Z_i = h(X_i)$ be a random sample from the $i$-th test domain.
Let $w$ be the parameters of the classifier found by the training algorithm that classifies correctly representations of all training domains.
If we fix a label $y\in\mathcal{Y}$, then
\begin{align*}
\mathbb{P}(\text{argmax}_k f_w(Z_i)_k = y \mid Y_i = y) =\\
\quad=\mathbb{P}(\text{argmax}_k f_w(Z_1)_k = y \mid Y_1 = y),
\numberthis\label{eq:prop2}
\end{align*}
as $p^3_i(z | y) = p^1_1(z | y)$ by the assumptions.
As $\EGzero'=0$ implies that the classifier $f_w$ perfectly classifies representations from the first training domain, the right-hand-side of (\ref{eq:prop2}) will be equal to 1.
Therefore, the $f_w$ will correctly classify also the representations of test domain $p^3_i(x,y)$. As $i$ was arbitrary, this concludes the proof.
\end{proof}

\section{Decompositions}\label{sec:decomposition-statements}
This appendix describes additional properties of the decompositions presented in the main text (see \cref{sec:decompositions}).

Let $\mathscr{D} = \{p_i(x,y)\}_{i=1}^{n_1+n_3}$ be a family of $n_1 + n_3$ domains.
Let $S$ be a subset of $\{1,2,\ldots,n_1+n_3\}$ of size $n_1$, chosen uniformly at random, and let $\bar{S} \triangleq (\{1,2,\ldots,n_1+n_3\}\setminus S)$ be the complement of $S$.
This subset $S$ defines a partition of $\mathscr{D}$ into training and test domains.
Let $w(S)$ and $\theta(S)$ denote the parameters of the classification head and the feature extractor after training on domains specified by $S$.
Let $(X_1,Y_1),\ldots,(X_{n_1+n_3}, Y_{n_1+n_3})$ be random variables drawn from distributions $p_1(x,y),\ldots,p_{n_1+n_3}(x,y)$, respectively.
To simplify the derivations below, we define
\begin{equation}
    F(A; w, \theta) = \frac{1}{|A|}\sum_{i \in A} \E_{X_i, Y_i}\sbr{ \ell\rbr{f_w(h_\theta(X_i)), Y_i}}, \label{eq:F}
\end{equation}
for any subset $A$ of $\{1,2,\ldots,n_1+n_3\}$.
To avoid unnecessary technical complications in statements and proofs below, we assume that the feature extractor parameters $\theta$, label classifier parameters $w$, and domain classifier parameters $\theta$ belong to compact domains.
This will allow us to replace infimum operators in the definitions of generalization and invariance metrics by minimum operators.

With these conventions, the generalization metrics can be written the following way
\begin{align}
\EGzero'(S) &= F(S; w(S), \theta(S)),\\
\EGone'(S) &= \min_{w'\in \mathcal{W}} F(\bar{S}; w', \theta(S))\\
\EGtwo'(S) &= F(\bar{S}; \tilde{w}(S),\theta(S)), \text{ where } \tilde{w}(S) \in\arg\min_{w' \in \mathcal{W}} F(S\cup\bar{S}; w',\theta(S)),\label{eq:app-e2}\\
\EGthree'(S) &= F(\bar{S}; w(S),\theta(S)).
\end{align}

Next, we prove several statements to establish the relationship between the metrics.
\begin{proposition}
Assume that $n_1=n_3$ and $(w(S),\theta(S)) \in \arg\min_{w',\theta'}F(w',S,\theta')$. Then
$\E_S\sbr{e'_0(S)} \le \E_S\sbr{e'_1(S)}$.
\label{prop:e0e1}
\end{proposition}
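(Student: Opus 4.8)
The plan is to exploit the symmetry of the random split together with the optimality of $(w(S),\theta(S))$ to swap the roles of $S$ and $\bar S$. First I would note that since $n_1 = n_3$, both $S$ and $\bar S$ have size $n_1$, so $\bar S$ is itself a uniformly random subset of size $n_1$; hence $\E_S[\EGone'(S)] = \E_S[\EGone'(\bar S)]$, where by $\EGone'(\bar S)$ I mean the quantity obtained by treating $\bar S$ as the training set, i.e. $\min_{w'} F(S; w', \theta(\bar S))$. The key observation is that the feature extractor trained on $\bar S$, namely $\theta(\bar S)$, is a fixed (if $S$-dependent) choice of parameters, so by definition of the infimum,
\begin{equation*}
\min_{w'\in\mathcal{W}} F(S; w', \theta(\bar S)) \le \min_{w'\in\mathcal{W}} \min_{\theta'} F(S; w', \theta') \le F(S; w(S), \theta(S)) = \EGzero'(S),
\end{equation*}
where the second inequality uses that $(w(S),\theta(S))$ is \emph{a} minimizer (so in particular feasible) for $\min_{w',\theta'} F(S; w',\theta')$. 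Wait — I need to be careful: the middle expression should just be $\min_{w',\theta'} F(S;w',\theta')$, which is $\le F(S;w(S),\theta(S))$ trivially, and this in turn equals $\EGzero'(S)$ by the defining equation for $\EGzero'$.

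Putting these together, I would take expectations over $S$: $\E_S[\EGone'(S)] = \E_S[\EGone'(\bar S)] \ge \E_S[\EGzero'(S)]$ — but here I must double-check the direction. We have $\EGone'(\bar S) = \min_{w'} F(S; w', \theta(\bar S)) \le \min_{w',\theta'} F(S; w',\theta') = \EGzero'(S')$ evaluated with training set $S$... this shows $\EGone'(\bar S)$ is \emph{below} $\min_{w',\theta'}F(S;\cdot)$, not above $\EGzero'(S)$. So the inequality to use is the other comparison: $\min_{w',\theta'} F(\bar S; w',\theta') \le F(\bar S; w(\bar S),\theta(\bar S)) = \EGzero'(\bar S)$, and separately $\EGone'(S) = \min_{w'} F(\bar S; w',\theta(S)) \ge \min_{w',\theta'} F(\bar S; w',\theta')$ since restricting the $\theta$-minimization to the single point $\theta(S)$ can only increase the value. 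Chaining: $\EGone'(S) \ge \min_{w',\theta'}F(\bar S;w',\theta')$, take $\E_S$, and use that $\E_S[\min_{w',\theta'}F(\bar S;w',\theta')] = \E_S[\min_{w',\theta'}F(S;w',\theta')]$ by the symmetry $S \leftrightarrow \bar S$ (both uniform of size $n_1$), and finally $\E_S[\min_{w',\theta'}F(S;w',\theta')] \le \E_S[F(S;w(S),\theta(S))] = \E_S[\EGzero'(S)]$. Hmm, this gives $\E_S[\EGone'(S)] \ge \E_S[\min F(\bar S)] = \E_S[\min F(S)] \le \E_S[\EGzero'(S)]$ — the two inequalities point the wrong way to conclude.

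Let me fix the logic cleanly. The correct chain is: $\EGone'(S) = \min_{w'} F(\bar S; w', \theta(S))$. I compare this to $\EGzero'(\bar S) = F(\bar S; w(\bar S),\theta(\bar S)) = \min_{w',\theta'} F(\bar S; w',\theta')$, where the last equality is the \emph{assumed} optimality of $(w(\bar S),\theta(\bar S))$. Since $\min_{w'}F(\bar S;w',\theta(S)) \ge \min_{w',\theta'}F(\bar S;w',\theta')$ (enlarging the minimization domain to include $\theta$ lowers the minimum), we get $\EGone'(S) \ge \EGzero'(\bar S)$ \emph{pointwise in $S$}. Now take $\E_S$ of both sides: $\E_S[\EGone'(S)] \ge \E_S[\EGzero'(\bar S)] = \E_S[\EGzero'(S)]$, where the final equality is precisely the distributional symmetry between $S$ and $\bar S$ (valid because $n_1 = n_3$ makes $|S| = |\bar S|$, so $\bar S$ has the same uniform distribution over size-$n_1$ subsets as $S$). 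This completes the argument. The main obstacle — and the reason the hypotheses $n_1 = n_3$ and global optimality of $(w(S),\theta(S))$ are both needed — is getting the pointwise comparison $\EGone'(S) \ge \EGzero'(\bar S)$ to line up with a symmetry that is only available when the train/test splits are exchangeable; without $n_1 = n_3$ the random subsets $S$ and $\bar S$ are not identically distributed, and without the optimality assumption $\EGzero'(\bar S)$ need not equal $\min_{w',\theta'}F(\bar S;w',\theta')$.
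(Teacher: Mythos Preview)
Your final argument (the paragraph beginning ``Let me fix the logic cleanly'') is correct and is essentially the paper's proof: both use that optimality gives $\EGzero'(T)=\min_{w',\theta'}F(T;w',\theta')$, that restricting the $\theta$-minimization to a single point can only raise the minimum, and that $n_1=n_3$ makes $S$ and $\bar S$ exchangeable so one can swap them under the expectation. The earlier paragraphs are exploratory dead ends (the inequalities there indeed point the wrong way), so in a write-up you should keep only the clean chain $\EGone'(S)\ge \EGzero'(\bar S)$ pointwise, then $\E_S[\EGone'(S)]\ge \E_S[\EGzero'(\bar S)]=\E_S[\EGzero'(S)]$.
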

\begin{proof}
\begin{align*}
\E_S\sbr{e'_0(S)} &= \E_S F(S; w(S),\theta(S))\\
&\le \E_{S} \min_{w' \in \mathcal{W}} F(S;w',\theta(\bar{S}))&&\text{(as $(w(S), \theta(S))$ is a global minimum of $F(S; w', \theta')$)}\\
&= \E_{\bar{S}}\min_{w'\in\mathcal{W}} F(\bar{S}; w', \theta(S))&&\text{(replacing $S$ by $\bar{S}$, as $n_1 = n_3 \Rightarrow S \stackrel{d}{=} \bar{S}$)}\\
&=\E_S\sbr{e'_1(S)}.&&\text{(by definition)}
\end{align*}
\end{proof}

Note that the assumptions of the proposition are critical. If $n_3 < n_1$, then the classification on the test domains might work well simply because there are fewer domains in the test set. In an extreme case, when $n_3=1$, the representations of that single domain might be easily separable, while finding a universal good classifier for all training domains can be hard. As the training is performed on the training domains, it is reasonable to assume the learned parameters $(w(S),\theta(S))$ are optimal with respect to the training domains. If this assumption is violated, one cannot exclude that the model can end up with a better classifier for the test domains ($\bar{S}$) while being trained on the training domains ($S$). Even with these assumptions, the partition $\mathscr{D}$ of into training and test domains can be ``adversarial'' in a sense that the test domains are much easier, which will cause smaller $\EGone$. However, such scenario cannot happen for every partition of $\mathscr{D}$. Hence, the Proposition \ref{prop:e0e1} proves the desired relation between $e_0(S)$ and $e_1(S)$ only in expectation over $S$.

The next two inequalities can be proved with weaker assumptions.

\begin{proposition} Let the generalization metrics $e'_1(S)$, $e'_2(S)$ and $e'_3(S)$ be defined as above. Then, (i) $e'_1(S) \le e'_{2}(S)$.
Furthermore, if $w(S) \in \arg\min_{w'} F(S;w',\theta(S))$ then (ii) $e'_2(S) \le e'_{3}(S)$.
\end{proposition}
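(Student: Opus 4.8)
The plan is to treat the two inequalities separately, since each reduces to an elementary optimality argument once the relevant objective is written out explicitly in terms of $F$ from \eqref{eq:F}.

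For (i), I would simply note that $\tilde w(S)$ is by construction a particular element of $\mathcal W$, so $e'_2(S) = F(\bar S; \tilde w(S), \theta(S))$ is an upper bound for $\min_{w' \in \mathcal W} F(\bar S; w', \theta(S)) = e'_1(S)$. Nothing beyond the definitions is used, which matches the claim that (i) needs no extra hypothesis.

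For (ii), the key algebraic observation is that the objective minimized by $\tilde w(S)$ decomposes as a convex combination of the training and test objectives: writing $\lambda = n_1/(n_1+n_3) \in (0,1)$, definition \eqref{eq:F} gives $F(S\cup\bar S; w', \theta(S)) = \lambda F(S; w', \theta(S)) + (1-\lambda) F(\bar S; w', \theta(S))$ for every $w'$. I would then chain two inequalities. First, optimality of $\tilde w(S)$ for the combined objective, applied at $w' = w(S)$, gives $\lambda F(S; \tilde w(S), \theta(S)) + (1-\lambda) F(\bar S; \tilde w(S), \theta(S)) \le \lambda F(S; w(S), \theta(S)) + (1-\lambda) F(\bar S; w(S), \theta(S))$. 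Second, the extra hypothesis $w(S) \in \arg\min_{w'} F(S; w', \theta(S))$ gives $F(S; w(S), \theta(S)) \le F(S; \tilde w(S), \theta(S))$, hence $\lambda F(S; w(S), \theta(S)) - \lambda F(S; \tilde w(S), \theta(S)) \le 0$. Substituting this bound into the first inequality cancels the $F(S;\cdot)$ contributions, leaving $(1-\lambda) F(\bar S; \tilde w(S), \theta(S)) \le (1-\lambda) F(\bar S; w(S), \theta(S))$; dividing by $1-\lambda = n_3/(n_1+n_3) > 0$ yields $F(\bar S; \tilde w(S), \theta(S)) \le F(\bar S; w(S), \theta(S))$, i.e. $e'_2(S) \le e'_3(S)$.

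The argument is essentially computation-free, so there is no substantial obstacle; the only points requiring care are getting the convex-combination identity right and checking the sign of $1-\lambda$ so that the final division (and the cancellation) is valid. The one conceptual subtlety worth flagging in the write-up is that part (ii) genuinely relies on $w(S)$ being optimal on the training domains — without it, $w(S)$ could incur larger loss than $\tilde w(S)$ even on $S$, and the chaining step fails — whereas part (i) holds unconditionally.
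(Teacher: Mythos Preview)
Your proposal is correct and follows essentially the same approach as the paper: part (i) is the trivial observation that $\tilde w(S)\in\mathcal W$, and part (ii) uses the same convex-combination decomposition (the paper writes $\alpha$ for your $\lambda$) together with the two optimality conditions for $\tilde w(S)$ and $w(S)$ to cancel the $F(S;\cdot)$ terms. The algebra is presented slightly differently (the paper rearranges to isolate an $\tfrac{\alpha}{1-\alpha}$ correction term and then bounds it by zero), but the argument is the same.
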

\begin{proof}
\textbf{(i)} We have that
\begin{align}
\EGone'(S) &= \min_{w'\in \mathcal{W}} F(\bar{S}; w', \theta(S)) \le F(\bar{S}; \tilde{w}(S),\theta(S)) = \EGtwo'(S).
\end{align}

\textbf{(ii)} Let $\alpha \triangleq n_1 / (n_1 + n_3)$. As $\tilde{w}(S)$ is a global minimum of $F(S \cup \bar{S}; w', \theta(S))$, we have that
\begin{equation}
    F(S \cup \bar{S}; \tilde{w}(S), \theta(S)) \le F(S \cup \bar{S}; w(S), \theta(S)),
\end{equation}
which is equivalent to
\begin{equation}
\alpha F(S; \tilde{w}(S), \theta(S)) + (1-\alpha) F(\bar{S}; \tilde{w}(S),\theta(S)) \le \alpha F(S;w(S),\theta(S)) + (1-\alpha)F(\bar{S}; w(S),\theta(S)).
\end{equation}
This simplifies to
\begin{equation}
F(\bar{S}; \tilde{w}(S),\theta(S)) \le F(\bar{S}; w(S),\theta(S)) + \frac{\alpha}{1-\alpha}\rbr{F(S;w(S),\theta(S)) - F(S; \tilde{w}(S), \theta(S))}.
\label{eq:27}
\end{equation}
The additional assumption that $w(S) \in \arg\min_{w'} F(S;w',\theta(S))$ implies that  $F(S;w(S),\theta(S)) \le F(S; \tilde{w}(S), \theta(S))$.
Connecting this with (\ref{eq:27}) we get
\begin{equation}
F(\bar{S}; \tilde{w}(S),\theta(S)) \le F(\bar{S}; w(S),\theta(S)),
\end{equation}
which is the same as $e'_2(S) \le e'_3(S)$.
\end{proof}

Next we investigate the relationship between invariance metrics. Similar to the function $F(A; w, \theta)$ defined above, we define
\begin{equation}
G(A; \omega, \theta) = \frac{1}{|A|}\sum_{i \in A} \E_{X_i}\sbr{ \ell\rbr{g_\omega(h_\theta(X_i)), i}},
\end{equation}
for any subset $A$ of $\{1,2,\ldots,n_1+n_3\}$.
With these conventions, the invariance metrics can be written as follows:
\begin{align}
\DIzero'(S) &=
1 - \min_{\omega \in \Omega} G(S;\omega,\theta(S)) - \frac{1}{n_1},\\
\DIone'(S) &= 1 - \min_{\omega \in \Omega}G(S\cup\bar{S};\omega,\theta(S)) - \frac{1}{n_1+n_3},\\
\DItwo'(S) &= 1- \frac{1}{C}\sum_{y=1}^C \min_{\omega \in \Omega}\E\sbr{\frac{1}{n_1 + n_3}\sum_{i \in S \cup \bar{S}} \ell(g_{\omega}(h_\theta(X_i)), i)\ \bigg\lvert\ E_y} -  \frac{1}{n_1 + n_3},
\end{align}
where $E_y$ denotes the event $(Y_1=y \wedge \cdots \wedge Y_{n_1+n_3}=y)$.

The relationship between $\DIzero'(S)$ and $\DIone'(S)$ is significantly more complicated as the sets of domains are different, and the accuracy scores are not directly comparable. 
Furthermore, the training algorithm can be ``adversarial'' in the sense that its produced representations of training domains are more distinguishable compared to that of testing domains.
The relationship between $\DIone'(S)$ and $\DItwo'(S)$ is simpler, and the desired inequality can be proved with mild assumptions.
\begin{proposition}
Let $d_1(S)$ and $d_2(S)$ be defined as above. Assuming that $P(Y_i=y)=1/C$ for all $i\in\{1,2,\ldots,n_1+n_3\}$ and $y \in \{1,2,\ldots,C\}$, we have that $d'_1(S) \le d'_2(S)$.
\end{proposition}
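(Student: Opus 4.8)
The plan is to reduce the claim to a comparison of the two ``best-case'' domain-classification error terms that appear in the definitions of $d'_1(S)$ and $d'_2(S)$, and then realize that reduction with a single domain classifier. Since both $d'_1(S)$ and $d'_2(S)$ have the form $1 - (\text{error term}) - \tfrac{1}{n_1+n_3}$, the inequality $d'_1(S)\le d'_2(S)$ is equivalent to
\begin{equation*}
\frac{1}{C}\sum_{y=1}^C \min_{\omega \in \Omega}\E\sbr{\frac{1}{n_1 + n_3}\sum_{i \in S \cup \bar{S}} \ell(g_{\omega}(h_\theta(X_i)), i)\ \bigg\lvert\ E_y} \le \min_{\omega \in \Omega} G(S\cup\bar{S};\omega,\theta(S)).
\end{equation*}
First I would fix $\omega^\star \in \arg\min_{\omega} G(S\cup\bar{S};\omega,\theta(S))$, which exists by the compactness assumption made in this appendix. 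Bounding each per-class minimum on the left by the value of the objective at the common point $\omega^\star$ (an average of minima is at most the average at any fixed feasible point), it then suffices to show that substituting $\omega^\star$ into the left-hand side recovers exactly $G(S\cup\bar{S};\omega^\star,\theta(S))$.

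The key step is unwinding the conditioning on $E_y$. Because the pairs $(X_i,Y_i)$ are drawn independently across domains, conditioning on the joint event $E_y = (Y_1=y\wedge\cdots\wedge Y_{n_1+n_3}=y)$ affects the term for domain $i$ exactly as conditioning on $Y_i=y$ alone, i.e. $\E[\ell(g_{\omega^\star}(h_\theta(X_i)),i)\mid E_y]=\E[\ell(g_{\omega^\star}(h_\theta(X_i)),i)\mid Y_i=y]$. Swapping the order of $\tfrac{1}{C}\sum_{y}$ and $\tfrac{1}{n_1+n_3}\sum_{i}$ and using $P(Y_i=y)=1/C$, the inner average $\tfrac{1}{C}\sum_{y=1}^C \E[\ell(g_{\omega^\star}(h_\theta(X_i)),i)\mid Y_i=y]$ equals $\sum_{y} P(Y_i=y)\,\E[\,\cdot\mid Y_i=y]=\E[\ell(g_{\omega^\star}(h_\theta(X_i)),i)]$ by the law of total expectation. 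Summing over $i\in S\cup\bar{S}$ and dividing by $n_1+n_3$ yields precisely $G(S\cup\bar{S};\omega^\star,\theta(S))$, which closes the argument.

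The only delicate point, which I would state explicitly, is the cross-domain independence used to replace $E_y$ by $Y_i=y$ inside the expectations; the remainder is bookkeeping with normalizing constants. It is also worth noting that the hypothesis $P(Y_i=y)=1/C$ is genuinely used: it is what makes the uniform average over $y$ in the definition of $d'_2$ coincide with averaging against the true label marginal, so that the conditional expectations reassemble into the unconditional one. A bare no-label-shift assumption with a non-uniform label marginal would not suffice for this reassembly.
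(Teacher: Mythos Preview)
Your proof is correct and follows essentially the same route as the paper's: both arguments reduce to the inequality ``average of minima $\le$ minimum of the average'' (you phrase this as fixing a common $\omega^\star$ and upper-bounding each per-class minimum, which is the same thing), followed by using independence across domains and the uniform label marginal to collapse $\tfrac{1}{C}\sum_y \E[\,\cdot\mid Y_i=y]$ back to the unconditional expectation. Your write-up is in fact slightly more careful than the paper's in flagging the cross-domain independence step and the precise role of the $P(Y_i=y)=1/C$ hypothesis.
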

\begin{proof}
\begin{align*}
\DItwo'(S) &= 1- \frac{1}{C}\sum_{y=1}^C \min_{\omega \in \Omega}\E\sbr{\frac{1}{n_1 + n_3}\sum_{i \in S \cup \bar{S}} \ell(g_{\omega}(h_\theta(X_i)), i)\ \bigg\lvert\ E_y} -  \frac{1}{n_1 + n_3}\\
&= 1- \frac{1}{C}\sum_{y=1}^C \min_{\omega \in \Omega}\rbr{\frac{1}{n_1 + n_3}\sum_{i \in S \cup \bar{S}}\E_{X_i}\sbr{\ell(g_{\omega}(h_\theta(X_i)), i)\ \bigg\lvert\ Y_i = y}} -  \frac{1}{n_1 + n_3}\\
&\ge 1- \min_{\omega \in \Omega}\rbr{\frac{1}{C}\sum_{y=1}^C\frac{1}{n_1 + n_3}\sum_{i \in S \cup \bar{S}}\E_{X_i}\sbr{\ell(g_{\omega}(h_\theta(X_i)), i)\ \bigg\lvert\ Y_i = y}} -  \frac{1}{n_1 + n_3}\\
&= 1- \min_{\omega \in \Omega}\rbr{\frac{1}{n_1 + n_3}\sum_{i \in S \cup \bar{S}}\E_{X_i}\sbr{\ell(g_{\omega}(h_\theta(X_i)), i)}} -  \frac{1}{n_1 + n_3}\\
&= 1 - \min_{\omega \in \Omega}G(S\cup\bar{S};\omega,\theta(S))-\frac{1}{n_1+n_3}\\
&=d'_1(S).
\end{align*}
\end{proof}

\section{Dataset Samples}
\label{sec:app-dataset-samples}
\cref{fig:dataset-samples} shows one sample per class per domain for Colored MNIST and Camelyon17 datasets used in our experiments. Note that Camelyon17 images are originally four-channel RGBA images, while Colored MNIST images have 50 ``channels'', the first three of which are interpreted as RGB in the figure.

\section{Hyperparameters of the algorithms}
\label{sec:hyperparams}
For all algorithms we fixed the regularization strength hyperparameter space to $\{0.1, 0.5, 1, 2, 5, 10, 15\}$. We used SGD optimizer in all cases. For Colored MNIST dataset we used learning rate of 0.01 and for Camelyon17 we used 0.001. All methods used 0.01 weight decay unless explicitly noted (\eg in \cref{tab:frozenBYOL}). We trained all our algorithms for fixed 10 epochs. One epoch of training on Camelyon17 took about 30 minutes (some algorithms, \eg DANN, IRM, take a bit longer because of more complicated computations) on our machine with two NVIDIA Titan V GPUs,  while the algorithms on Colored MNIST took only several seconds.

To produce the plots used in this paper we extracted learned representations by forwarding the data through the learned network and trained multiple logistic regression functions. The duration of this process strongly depends on the size of the representations and the number of samples in the dataset. Processing the entire dataset for one model on a single Titan V GPU takes 20 minutes for Camelyon17 and less than a minute for Colored MNIST. To produce a single plot we process 10 models (after each epoch) or 8 models (for each value of the hyperparameter $\beta$). For some plots, we processed 10 models per epoch epochs to see the behavior in more details in the first three epochs (\eg \cref{fig:Camelyon17_ERM_random_test}). 
    
\section{More Results}
\label{sec:app-more-results}

In this section we provide more details about the trained models. We had trouble training IRM on both datasets. For many combinations of hyperparameters the loss was getting NaN at early stages of the training. One of the successful attempts was on Colored MNIST with $\beta=10$. The biggest contributor to the error was changing during the training, as shown on \cref{fig:CMNIST_IRM}. 

\cref{fig:CMNIST_additional} shows generalization errors and domain-distinguishability on Colored MNIST for SD and GroupDRO algorithms. Still, $\EGtwo$ and $\EGthree$ are the main contributors to the generalization error. \cref{fig:CMNIST_HSIC_resnetk} shows that more complex networks (i.e. ResNets with more parameters) have smaller errors on the training set and less invariance across training and test domains.

\cref{fig:Camelyon_CORAL_HSIC_DRO_SD} shows the performance of three more algorithms on Camelyon17. With strong enough regularization, the models underfit the training set. The biggest contributor to the generalization error for other models is the test set inseparability. \cref{fig:Camelyon_DeepCORAL_seed} shows the impact of the random seed when DeepCORAL is trained on Camelyon17. Some variance is visible in both generalization error and domain distringuishability. In all three cases the algorithm increases domain invariance by the end of the training, but it does not translate into better generalization. 

\section{Notes on Label Shift}
\label{sec:app-label-shift}
Although label shift is common in real-world applications, it makes error analysis extremely complicated.
First, when the number of domains is large, characterizing types of label shift is a challenge on its own. 
Label shift can appear between training domains, between individual training and test domains, and between the union of training domains and the union of test domains. iWildCam dataset from WILDS benchmark has all of these shifts.
Second, the accuracy of the model can change in both directions on the test domains under label shift. In case when the test set contains more examples from ``easier'' classes, the error on the test set might be low, and it can hide the performance drop due to the domain shift.
We also observe that most of the current domain generalization algorithms simply ignore the label shift issue. In fact, the proof of Proposition \ref{prop:one} 
shows that if $e'_2=0$ and the representations are domain invariant with respect to the union of training and test domains, label distribution $p(y)$ is also invariant. It implies that enforcing invariance of representations $p(z)$ when $p(y)$ is not invariant is not desirable, as 
a common classifier will not exist. 
Finally, to the best of our knowledge, robustness of algorithms with respect to unknown label shifts is not explored even if $p(x|y)$ is constant across domains. Current label shift literature (\eg \cite{label-shift-Lipton}) assumes access to the test domain and discusses adaptation strategies.


\section{Visualization of Representation Spaces}
\label{sec:app-representation-space}
To visualize the learned representations, we forward pass the datasets (all domains) through the networks, take the representations $z=h_{\theta}(x)$, perform a single two-dimensional Principal Component Analysis for each model (on the combination of all domains), and plot the results. Colors encode the labels, while the marker type encodes the domain. \cref{fig:z_CMNIST_ERM} shows a typical failure on Colored MNIST dataset. For example, the representations of digits $1$ are grouped in multiple clusters. Moreover, the representations of the samples from the validation domain $V_1^2$ (indicated by star markers) are quite far from the other clusters. This shows that a linear classifier that successfully works on the training domains might not be able to classify the digits in the validation domain. \cref{fig:z_CMNIST_HSIC1} shows the more successful case obtained with ERM+HSIC algorithm where the representations of digits are grouped regardless of the domains.

\cref{fig:z_Camelyon_ERM} shows the space learned by ERM baseline on Camelyon17 dataset. It demonstrates a case when the dots are clearly separable according to the label (color) for the training domains, but there are a few dots from the validation and test domains ($V_1^2$ and $V_1^3$) that are in the neighborhood of the wrong color. On the 2D space it can be seen that the representations of the samples from validation and test domains are not fully separable.  
\cref{fig:z_Camelyon_HSIC15} demonstrates the case when the regularization is too strong and the representations of most samples have collapsed near $(0,0)$, leading to large $\EGzero$.

\section{Correlation Plots}\label{sec:appcorrelation}

The first row of \cref{fig:corr_val_test} shows correlation plots between generalization error $\EGthree'$ and training-validation domain distinguishability $\DIone'$ for various algorithms. Every dot corresponds to a single model saved at each epoch. We expect to see positive correlation between these metrics. For Colored MNIST, DeepCORAL shows $0.52$ correlation (\cref{fig:corr_e3d1_DeepCORAL_CMNIST}), while ERM+HSIC has $0.05$ correlation (\cref{fig:corr_e3d1_HSIC_CMNIST}). The latter is explained by large $\EGzero$ when regularization is too strong. We do not see any correlation between $\EGthree'$ and $\DIone'$ on Camelyon 17. 
The second row shows the relationship between the errors of many models on validation and test sets. High correlation implies that the accuracy on the validation set can serve as a good metric for model selection. ERM+HSIC, the only algorithm that performed well on Colored MNIST, demonstrates high correlation (\cref{fig:corr_val_test_HSIC_CMNIST}. The correlation for other algorithms is lower. \cref{fig:corr_val_test_CORAL_Camelyon} shows no correlation for DeepCORAL trained on Camelyon17 with various seeds.

\begin{figure}[!ht]
    \centering
    \begin{subfigure}{0.37\linewidth}
    \includegraphics[width=\linewidth]{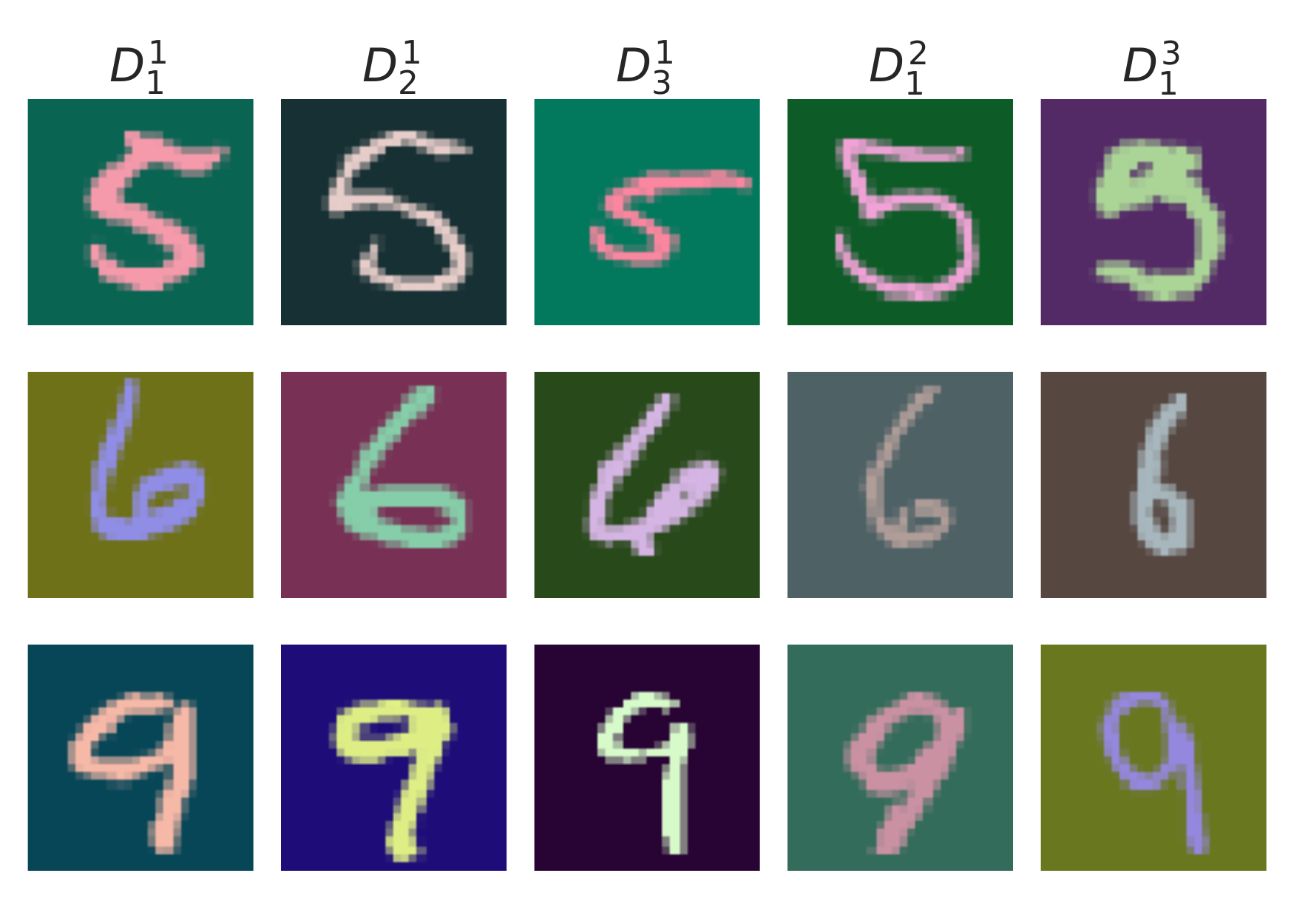}
    \caption{Colored MNIST}
    \end{subfigure}
    \begin{subfigure}{0.58\linewidth}
    \includegraphics[width=\linewidth]{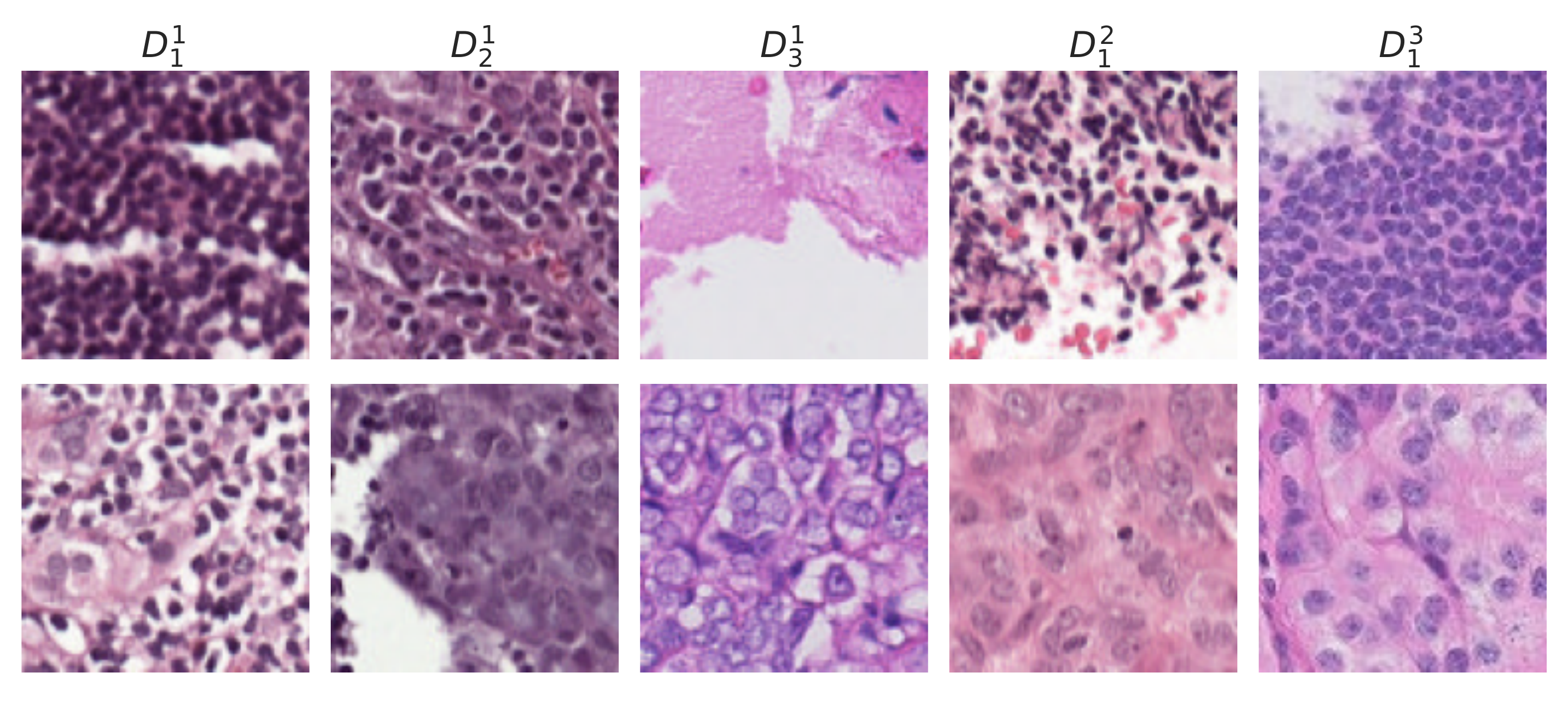}
    \caption{Camelyon17}
    \end{subfigure}
    \caption{Samples of Colored MNIST and Camelyon17 datasets. Both datasets have three training domains $D_1^1$, $D_2^1$, $D_3^1$, one validation domain $D_1^2$ and one test domain $D_1^3$. The first row of (b) shows normal tissue samples, while the samples of the second row contain tumor.}
    \label{fig:dataset-samples}
\end{figure}

\begin{figure}[!ht]
\centering
\begin{subfigure}{0.21\linewidth}
    \includegraphics[width=\linewidth]{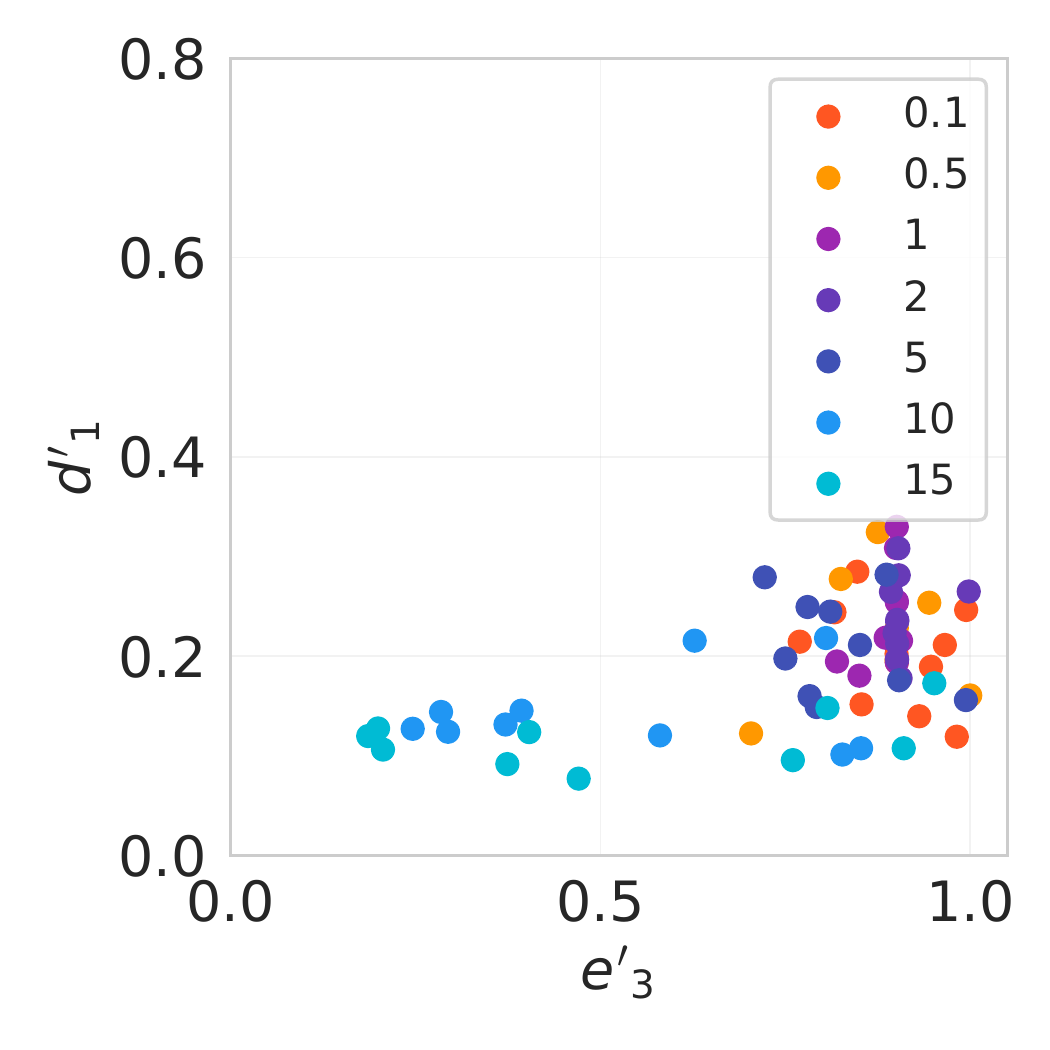}
    \caption{DeepCORAL}
    \label{fig:corr_e3d1_DeepCORAL_CMNIST}
\end{subfigure}
\begin{subfigure}{0.21\linewidth}
    \includegraphics[width=\linewidth]{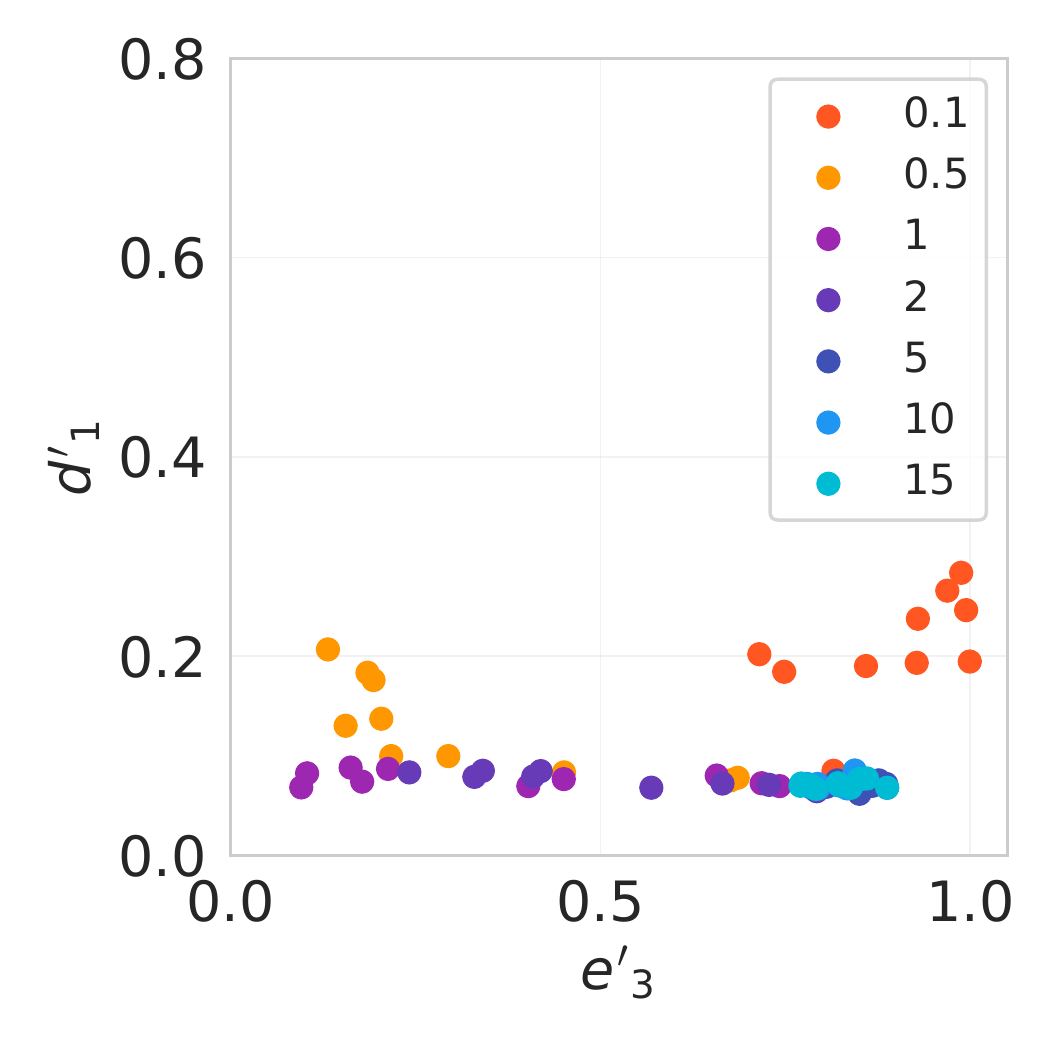}
    \caption{ERM+HSIC}
    \label{fig:corr_e3d1_HSIC_CMNIST}
\end{subfigure}
\begin{subfigure}{0.21\linewidth}
    \includegraphics[width=\linewidth]{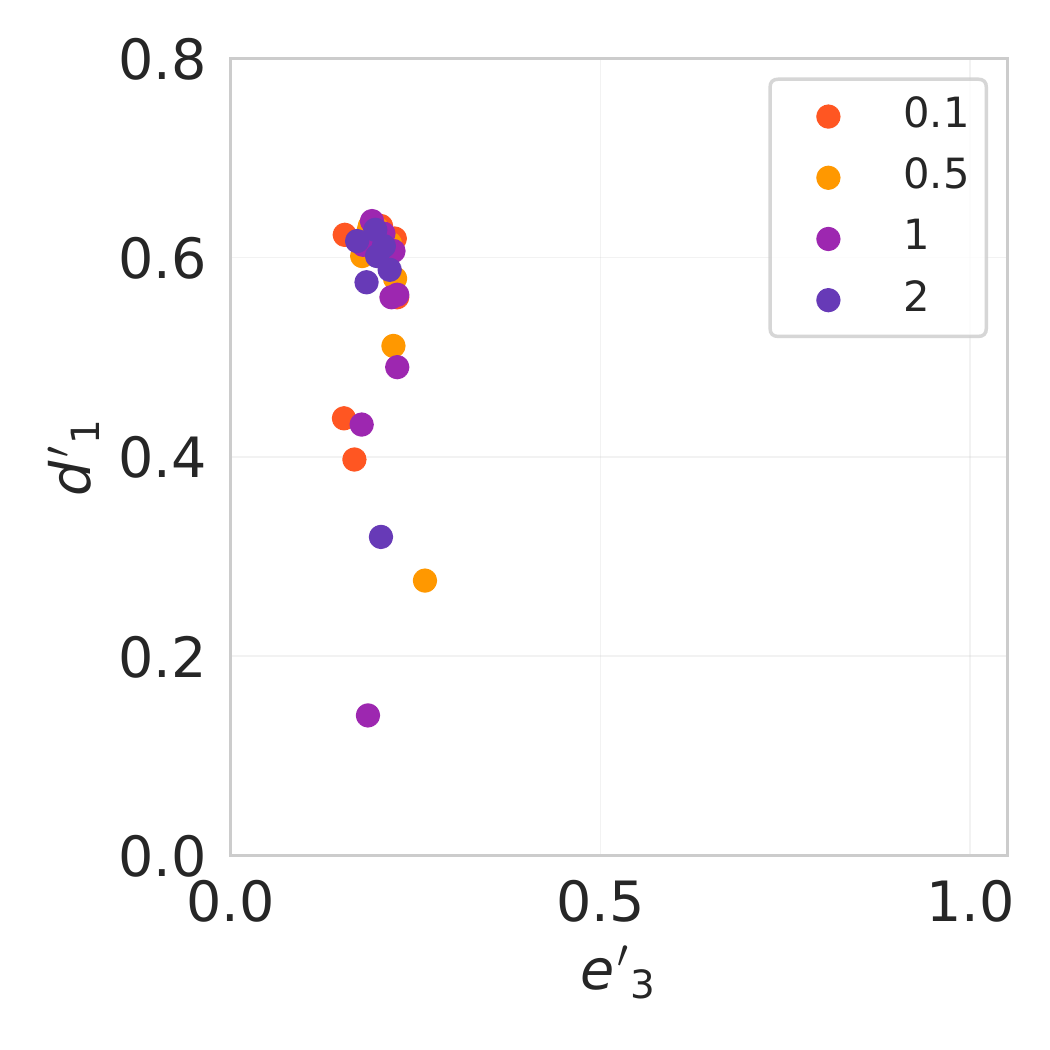}
    \caption{DeepCORAL}
    \label{fig:corr_e3d1_DeepCORAL_Camelyon}
\end{subfigure}
\begin{subfigure}{0.21\linewidth}
    \includegraphics[width=\linewidth]{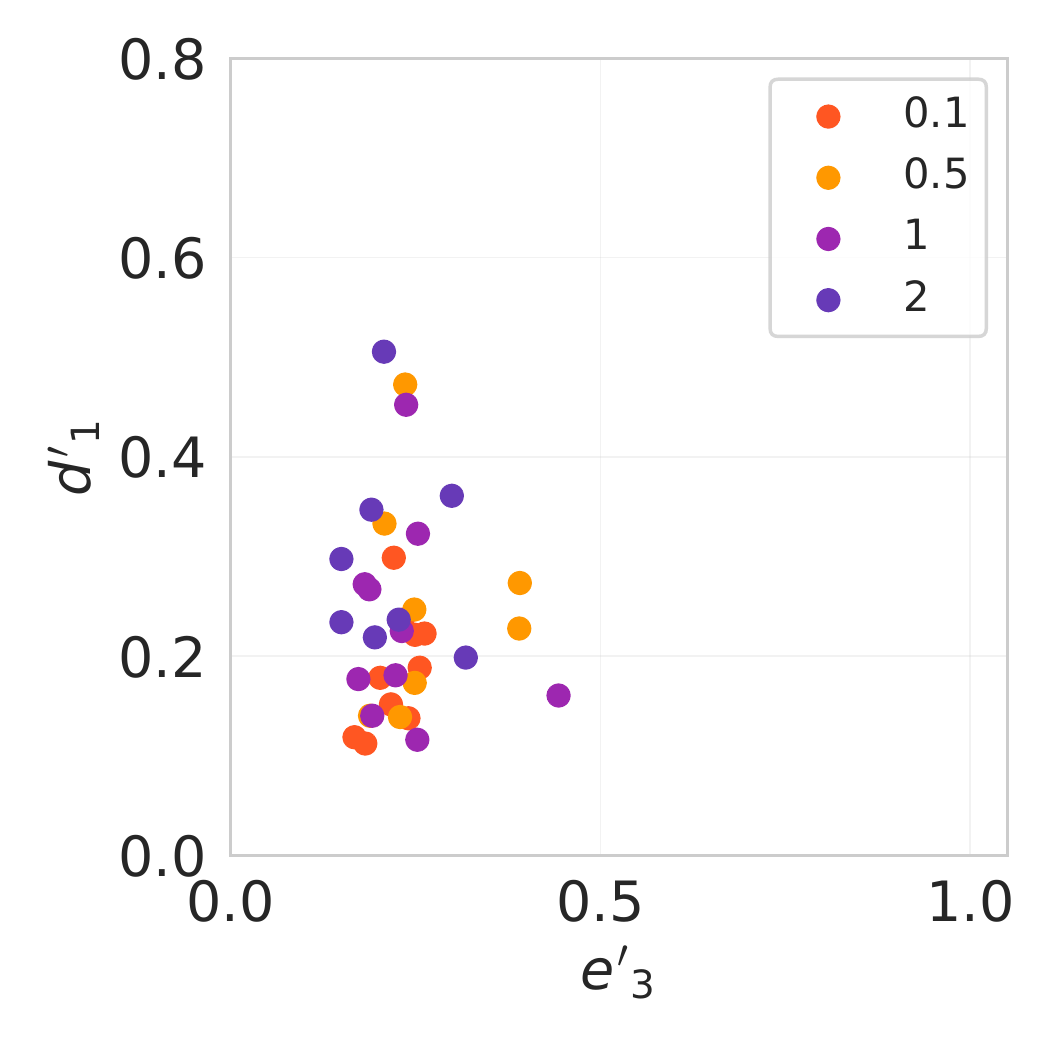}
    \caption{GroupDRO}
    \label{fig:corr_e3d1_GroupDRO_Camelyon}
\end{subfigure}

\begin{subfigure}{0.21\linewidth}
    \includegraphics[width=\linewidth]{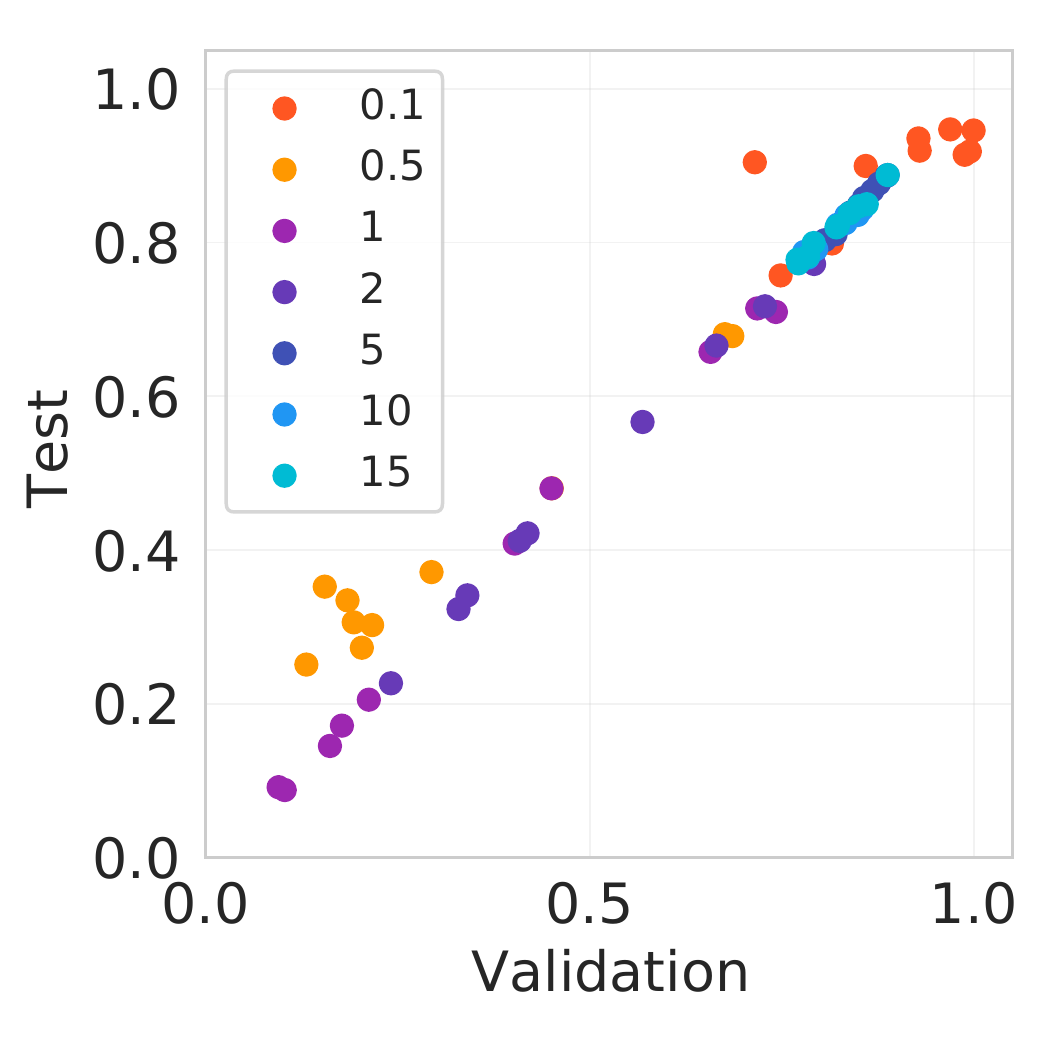}
    \caption{ERM+HSIC}
    \label{fig:corr_val_test_HSIC_CMNIST}
\end{subfigure}
\begin{subfigure}{0.21\linewidth}
    \includegraphics[width=\linewidth]{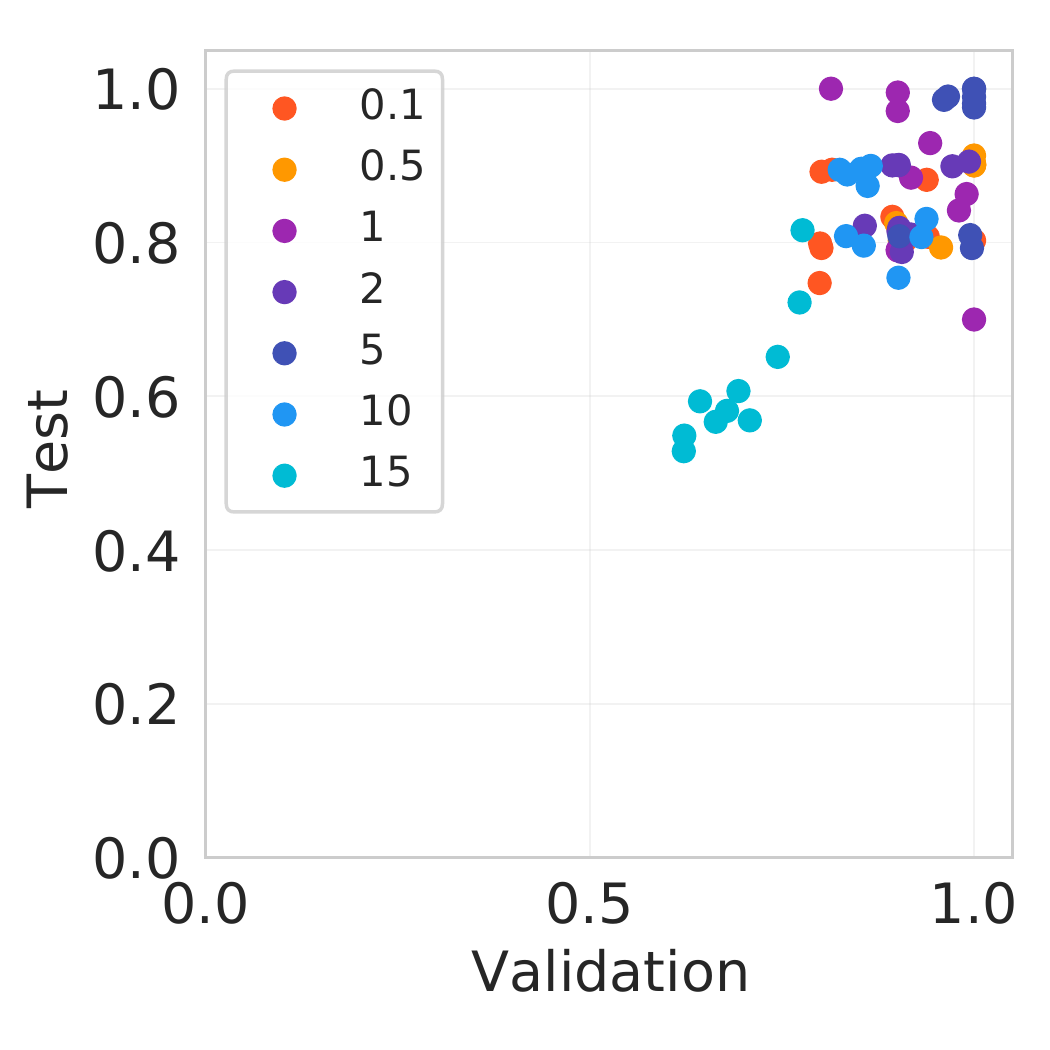}
    \caption{SD}
    \label{fig:corr_val_test_SD_CMNIST}
\end{subfigure}
\begin{subfigure}{0.21\linewidth}
    \includegraphics[width=\linewidth]{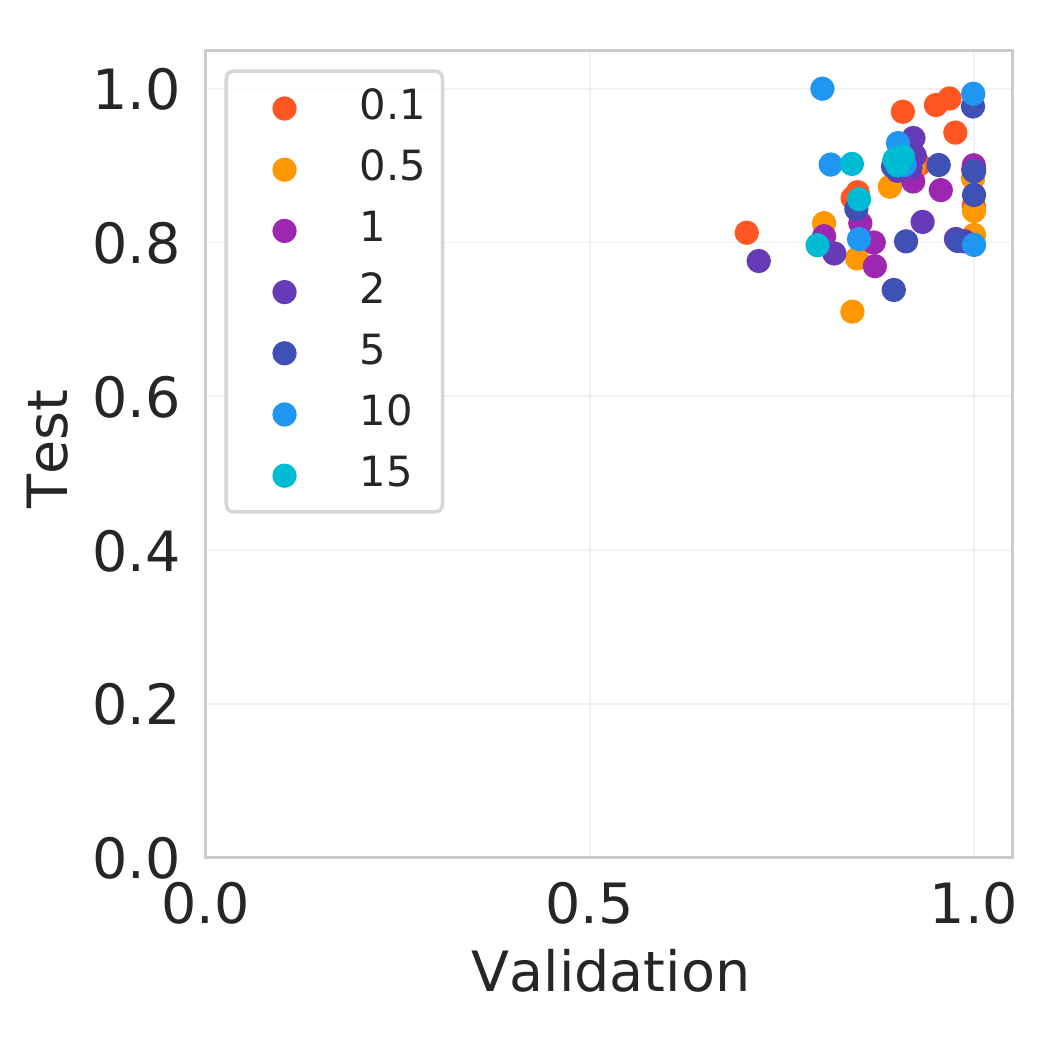}
    \caption{DANN}
    \label{fig:corr_val_test_DANN_CMNIST}
\end{subfigure}
\begin{subfigure}{0.21\linewidth}
    \includegraphics[width=\linewidth]{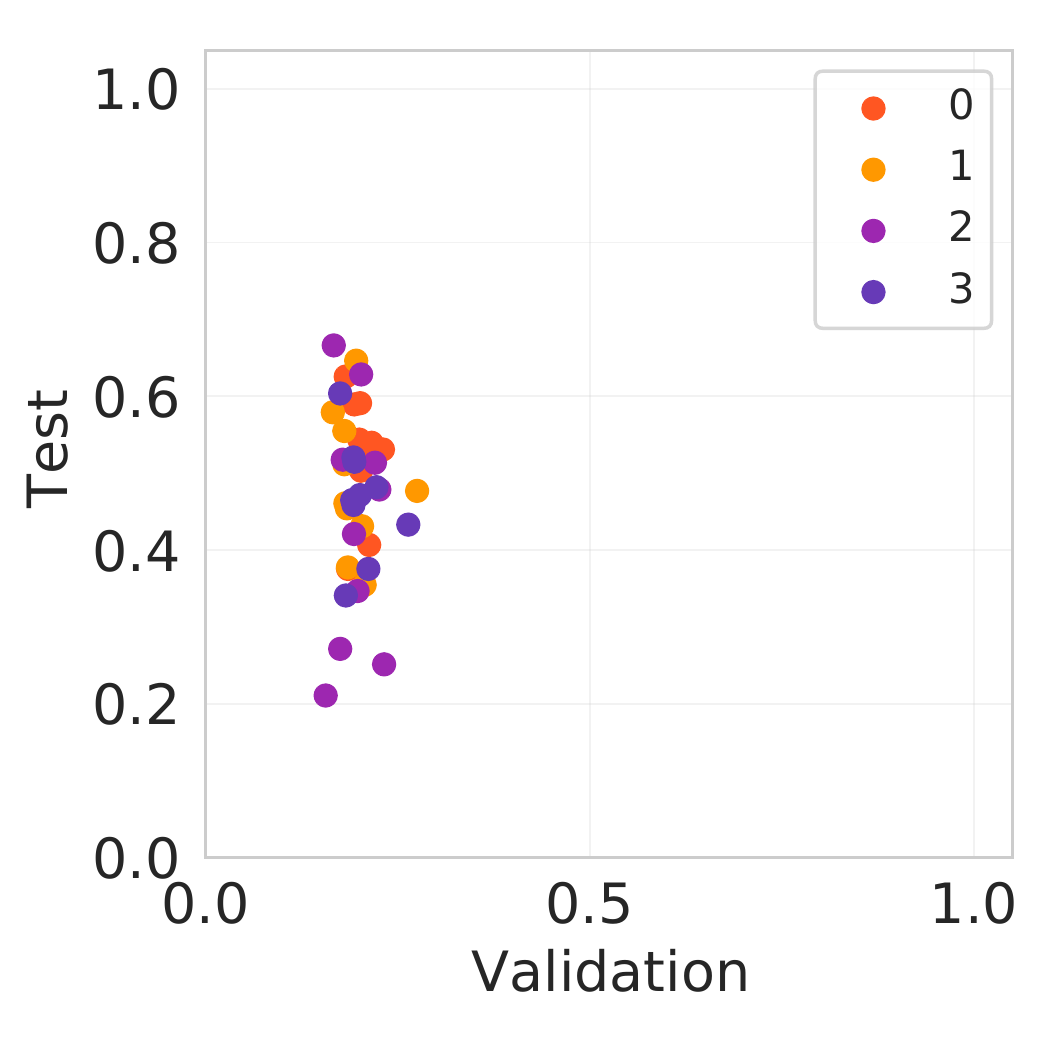}
    \caption{DeepCORAL}
    \label{fig:corr_val_test_CORAL_Camelyon}
\end{subfigure}
\caption{First row: relationship between generalization error $\EGthree'$ and training-validation domain distinguishability $\DIone'$ for various algorithms on (a-b) Colored MNIST and (c-d) Camelyon 17 datasets. Colors of the dots indicate the strength of regularization. Second row: correlation plots between validation and test errors. Colors of the dots indicate the strength of regularization for the models trained on Colored MNIST (e-g) or random seed for the models trained on Camelyon17 (h).}
\label{fig:corr_val_test}
\end{figure}

\begin{figure}
\centering
\begin{subfigure}{0.23\linewidth}
    \includegraphics[width=\linewidth]{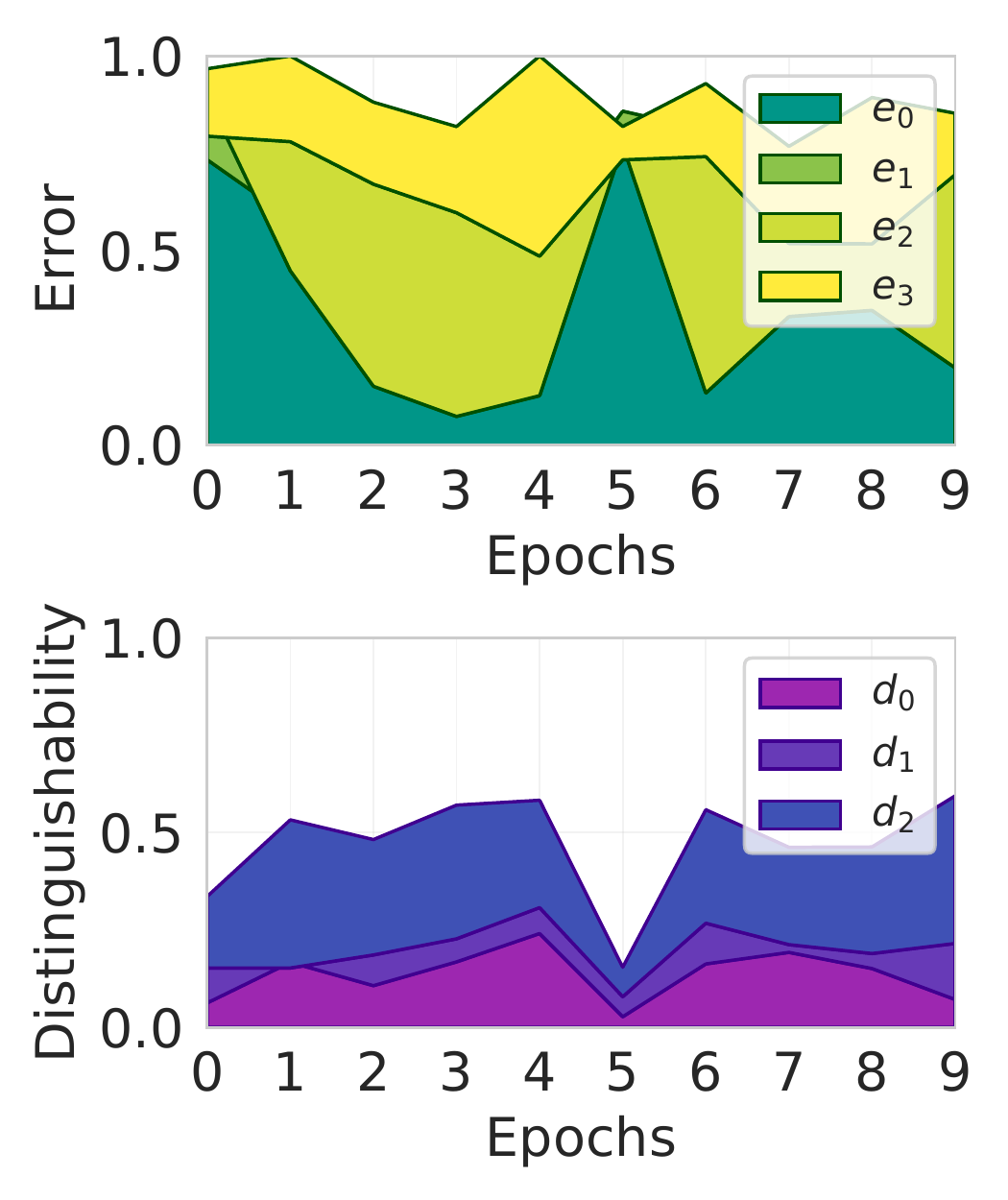}
    \caption{IRM ($\beta=10$)}
    \label{fig:CMNIST_IRM}
\end{subfigure}
\begin{subfigure}{0.23\linewidth}
    \includegraphics[width=\linewidth]{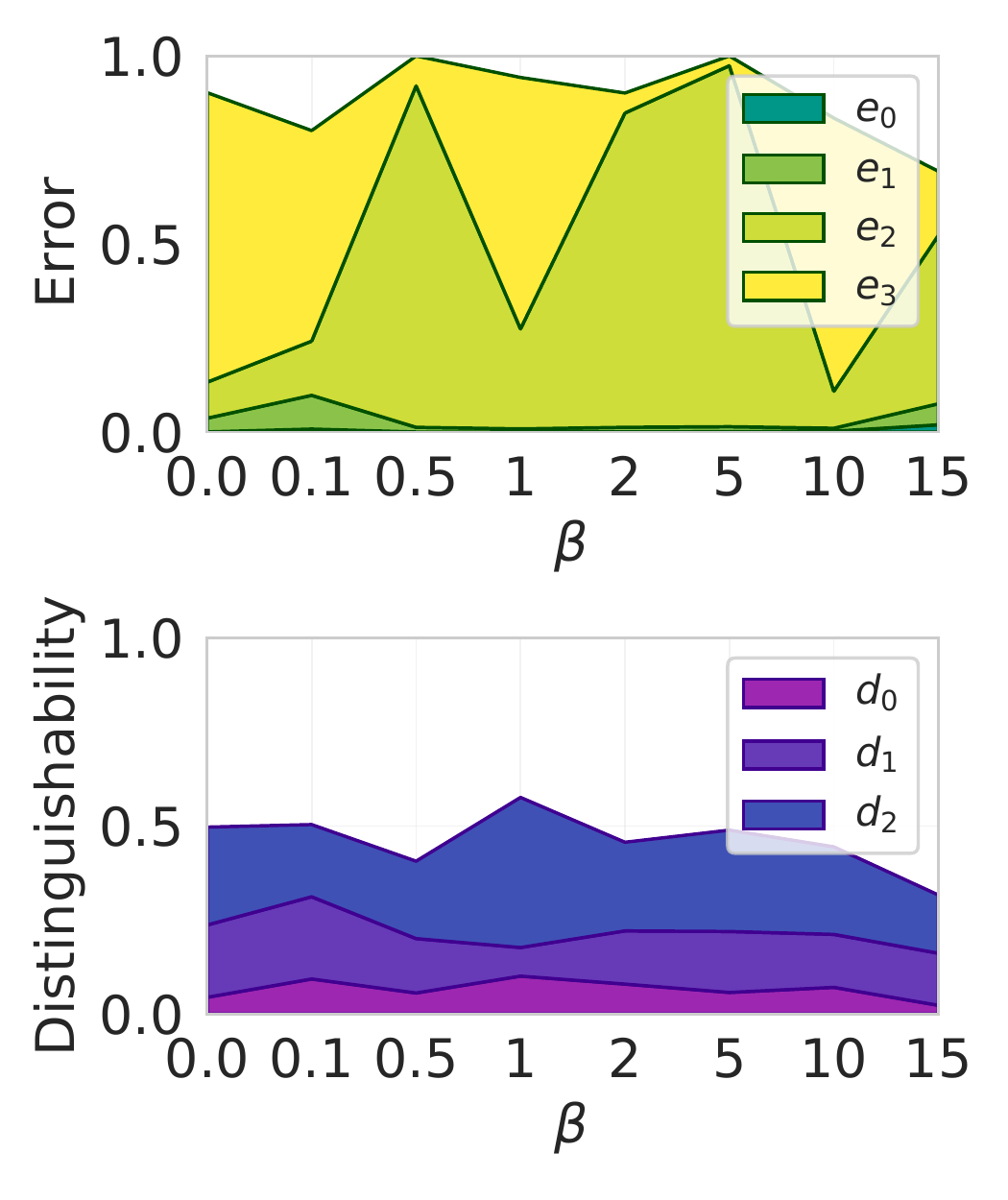}
    \caption{SD}
    \label{fig:CMNIST_SD}
\end{subfigure}
\begin{subfigure}{0.23\linewidth}
    \includegraphics[width=\linewidth]{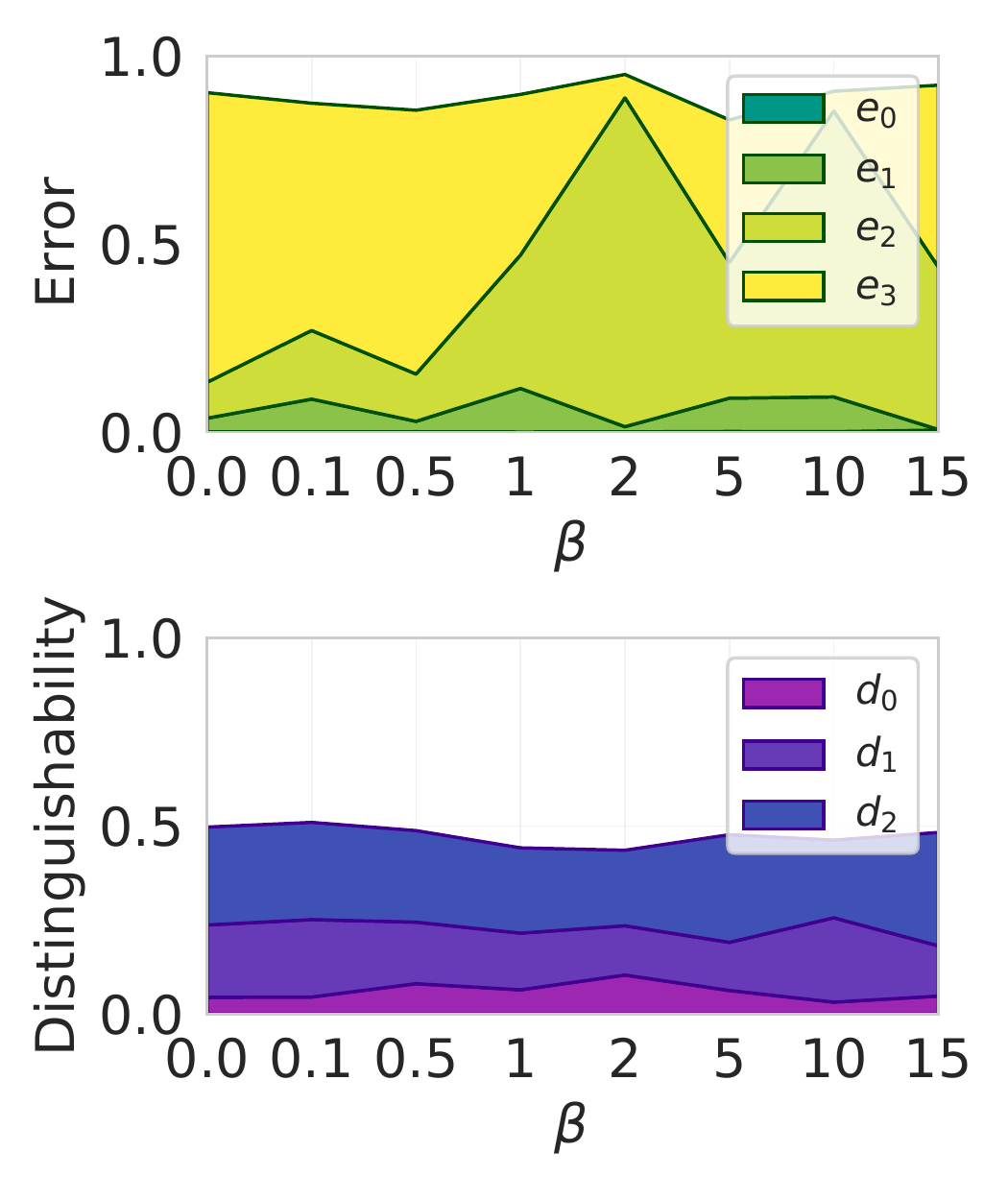}
    \caption{GroupDRO}
    \label{fig:CMNIST_GroupDRO}
\end{subfigure}
\begin{subfigure}{0.23\linewidth}
    \includegraphics[width=\linewidth]{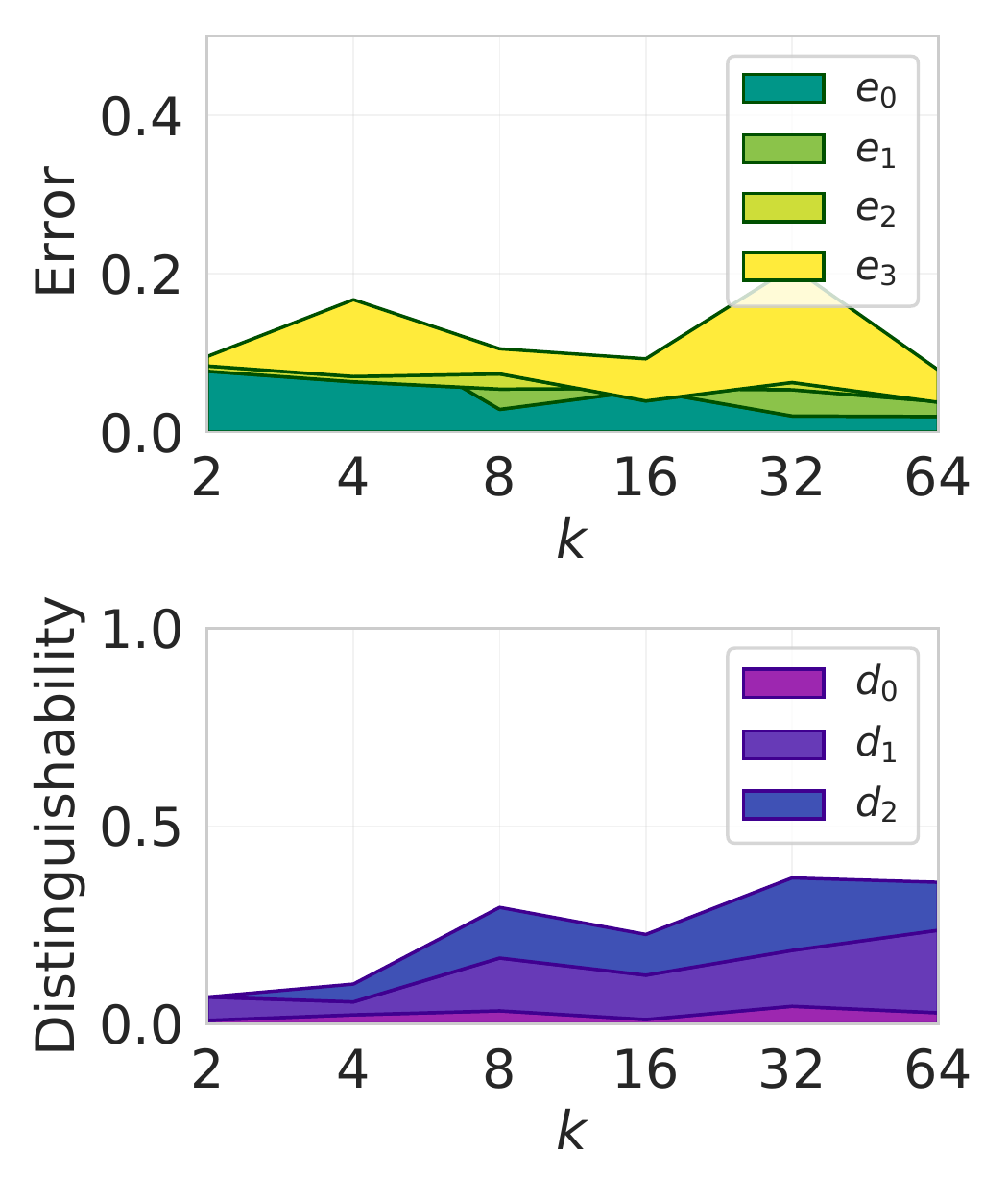}
    \caption{ERM+HSIC ($\beta=1$)}
    \label{fig:CMNIST_HSIC_resnetk}
\end{subfigure}

\caption{Decomposition of generalization errors and domain-distinguishability of three more algorithms on Colored MNIST dataset measured on the validation domains. Horizontal axis corresponds to regularization strength of the algorithms.}
\label{fig:CMNIST_additional}
\end{figure}

\begin{figure}
\centering
\begin{subfigure}{0.23\linewidth}
    \includegraphics[width=\linewidth]{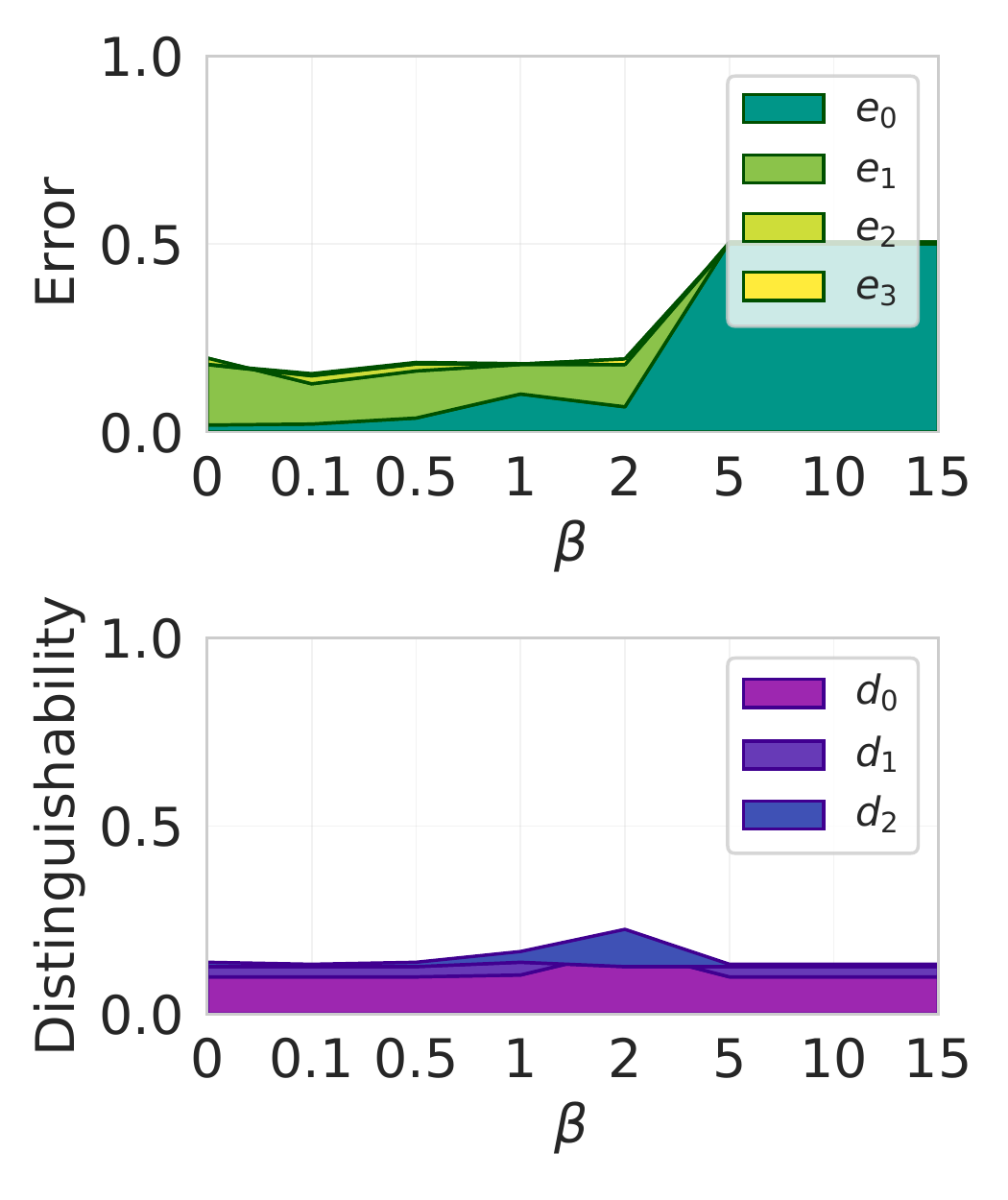}
    \caption{ERM+HSIC}
    \label{fig:Camelyon_HSIC}
\end{subfigure}
\begin{subfigure}{0.23\linewidth}
    \includegraphics[width=\linewidth]{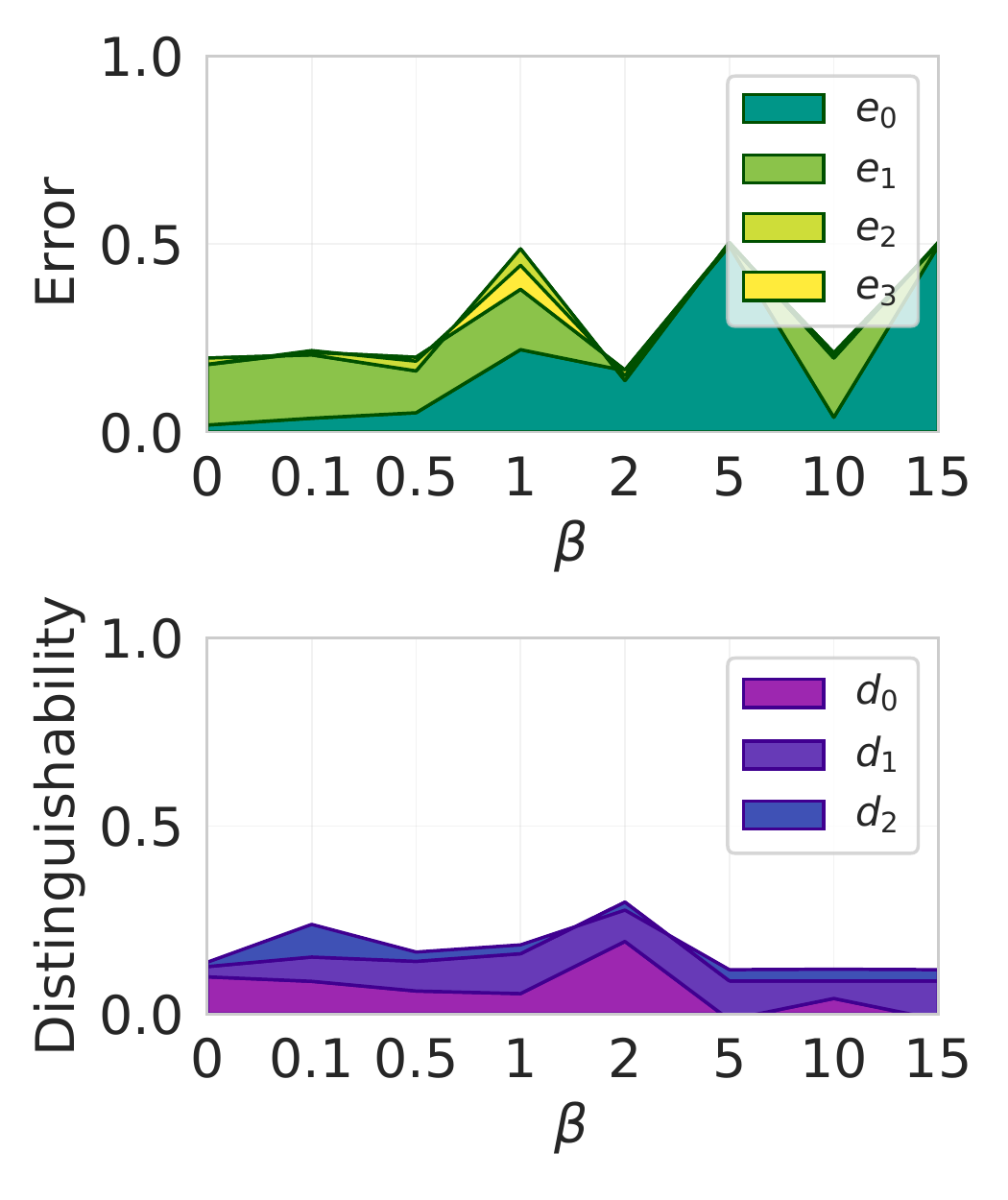}
    \caption{GroupDRO}
    \label{fig:Camelyon_DRO}
\end{subfigure}
\begin{subfigure}{0.23\linewidth}
    \includegraphics[width=\linewidth]{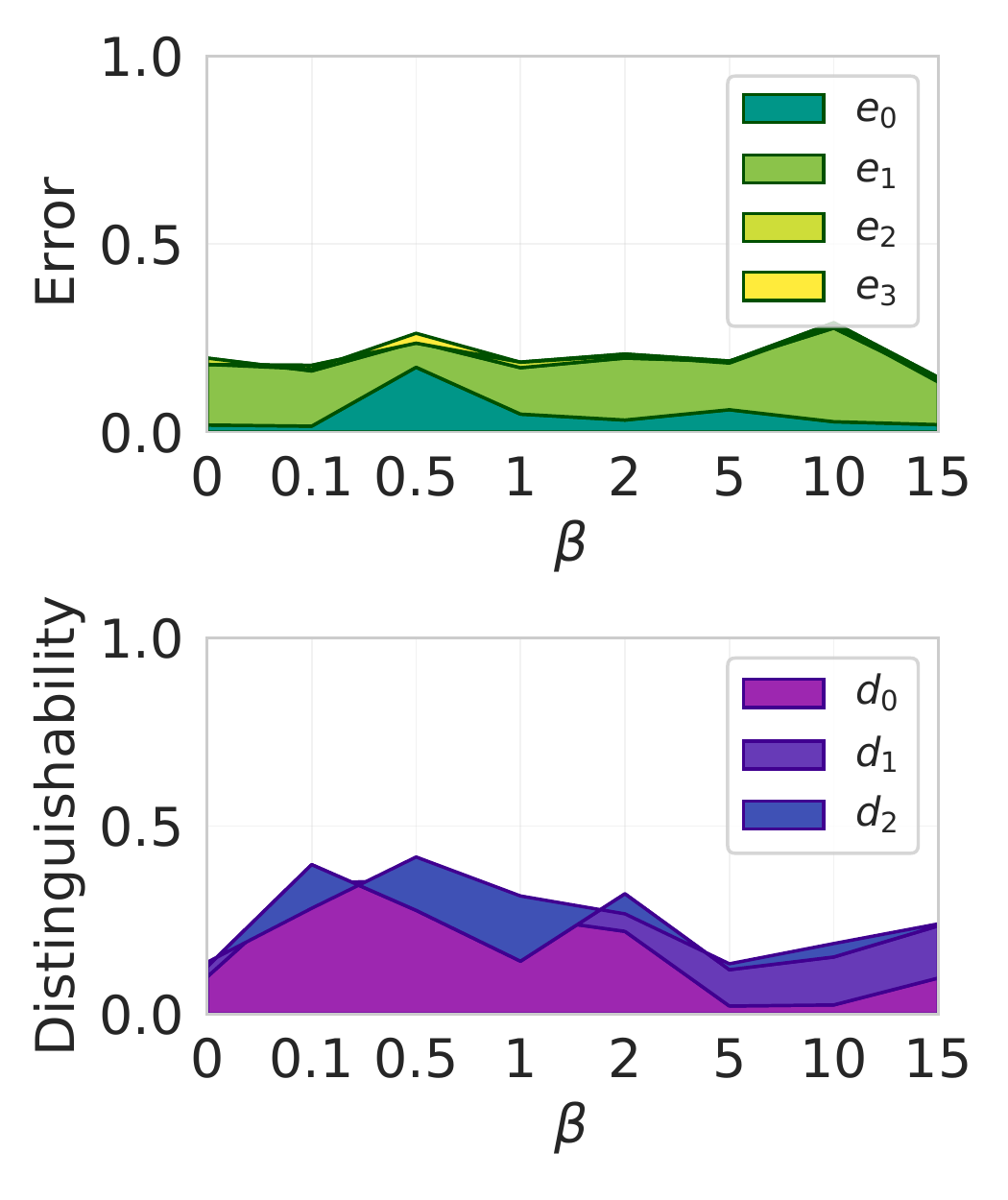}
    \caption{DeepCORAL}
    \label{fig:Camelyon_CORAL}
\end{subfigure}
\begin{subfigure}{0.23\linewidth}
    \includegraphics[width=\linewidth]{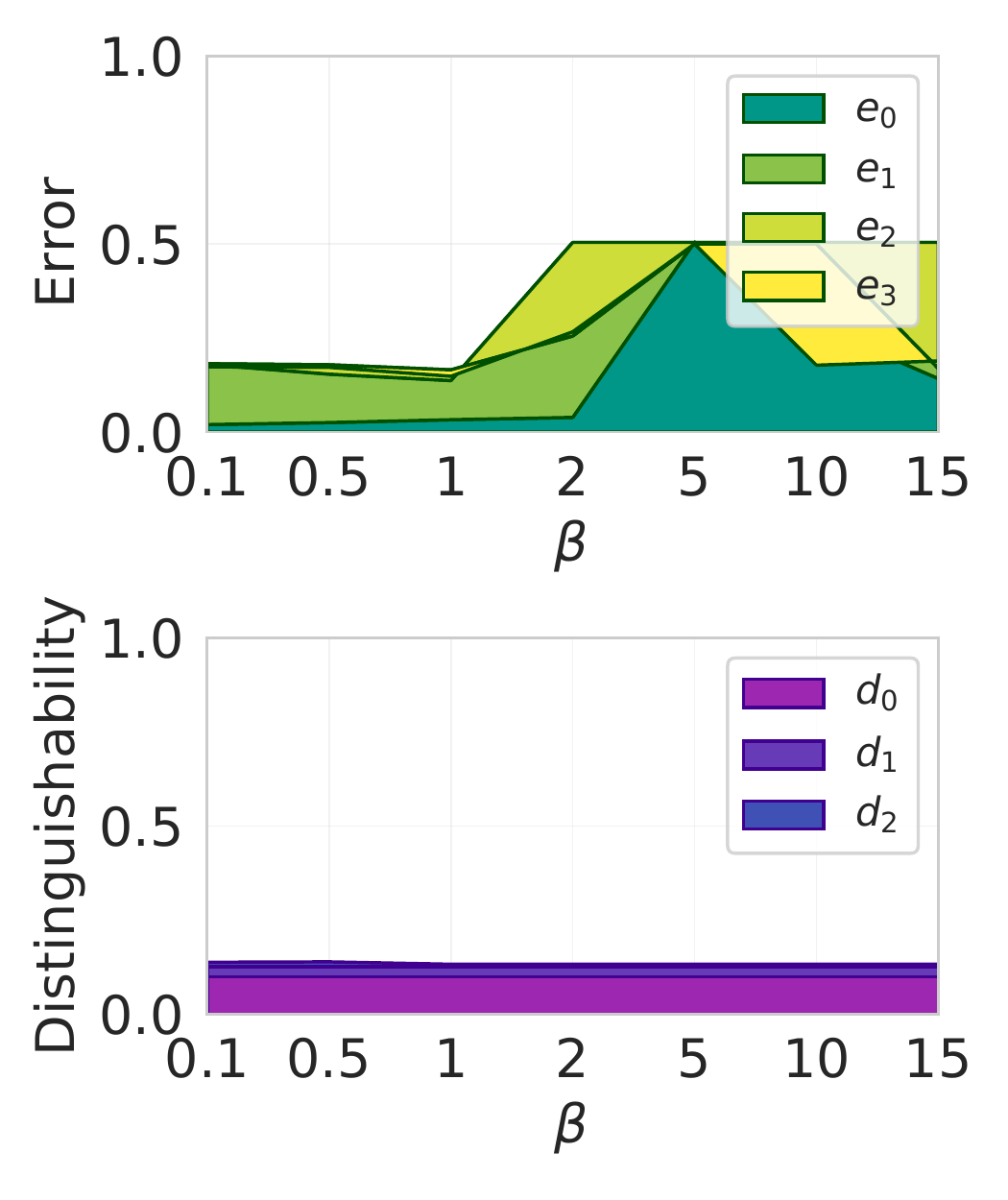}
    \caption{SD}
    \label{fig:Camelyon_SD}
\end{subfigure}
\caption{Decomposition of generalization errors and domain-distinguishability of three more algorithms on Camelyon17 dataset measured on the validation domains. Horizontal axis corresponds to regularization strength of the algorithms.}
\label{fig:Camelyon_CORAL_HSIC_DRO_SD}
\end{figure}

\begin{figure}
\centering
\begin{subfigure}{0.23\linewidth}
    \includegraphics[width=\linewidth]{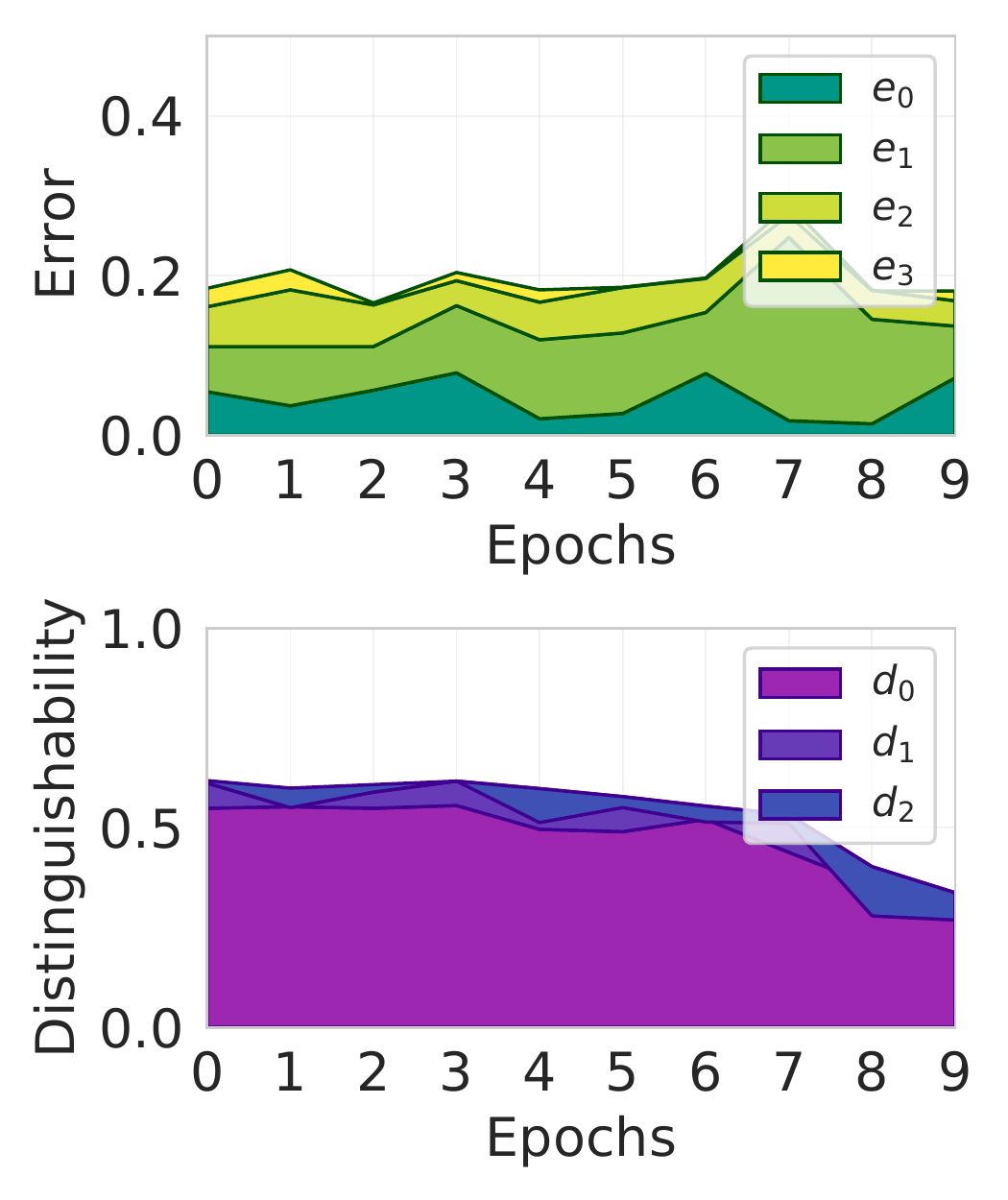}
    \caption{Camelyon17 (seed=1)}
    \label{fig:Camelyon_DeepCORAL_seed1}
\end{subfigure}
\begin{subfigure}{0.23\linewidth}
    \includegraphics[width=\linewidth]{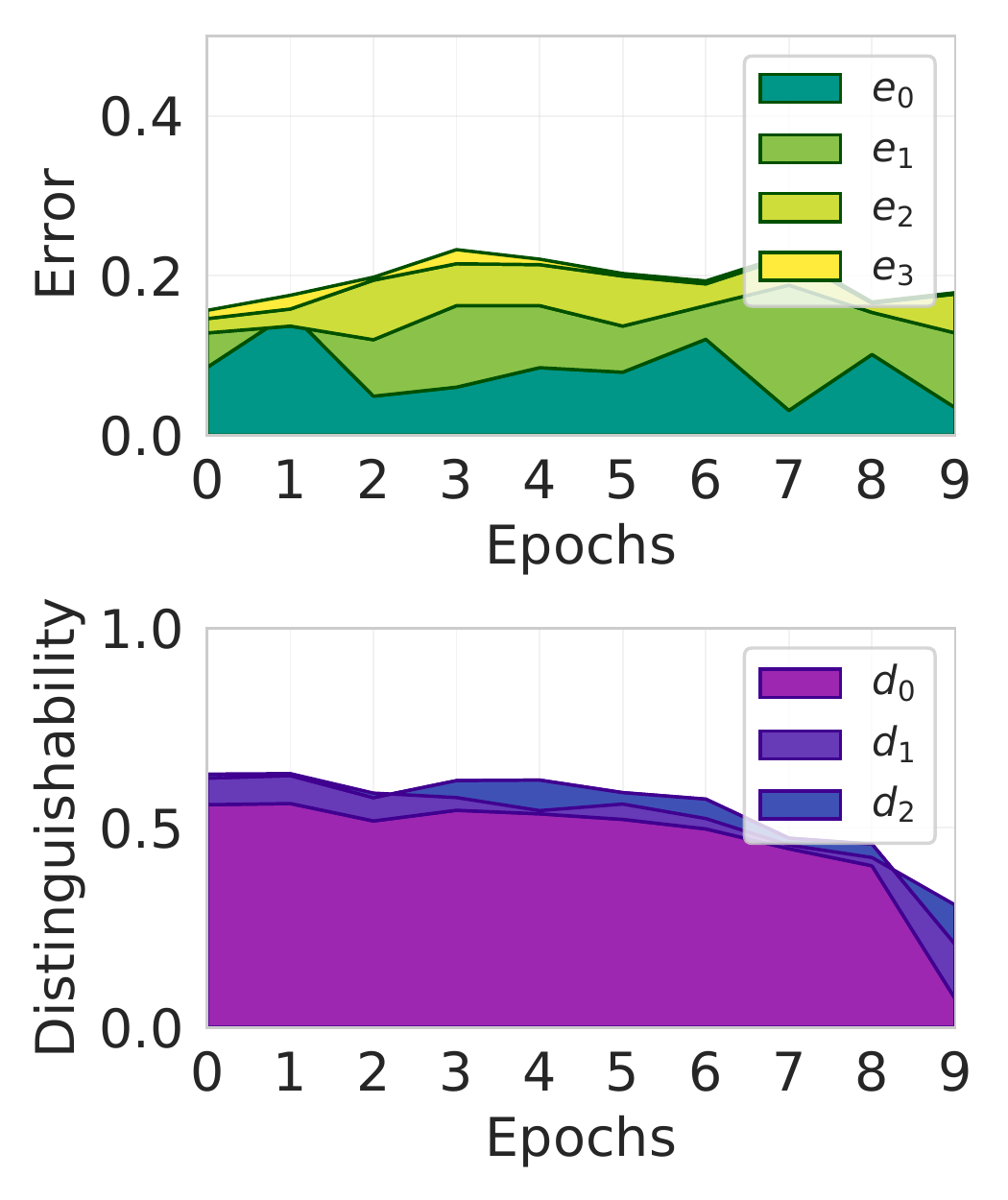}
    \caption{Camelyon17 (seed=2)}
    \label{fig:Camelyon_DeepCORAL_seed2}
\end{subfigure}
\begin{subfigure}{0.23\linewidth}
    \includegraphics[width=\linewidth]{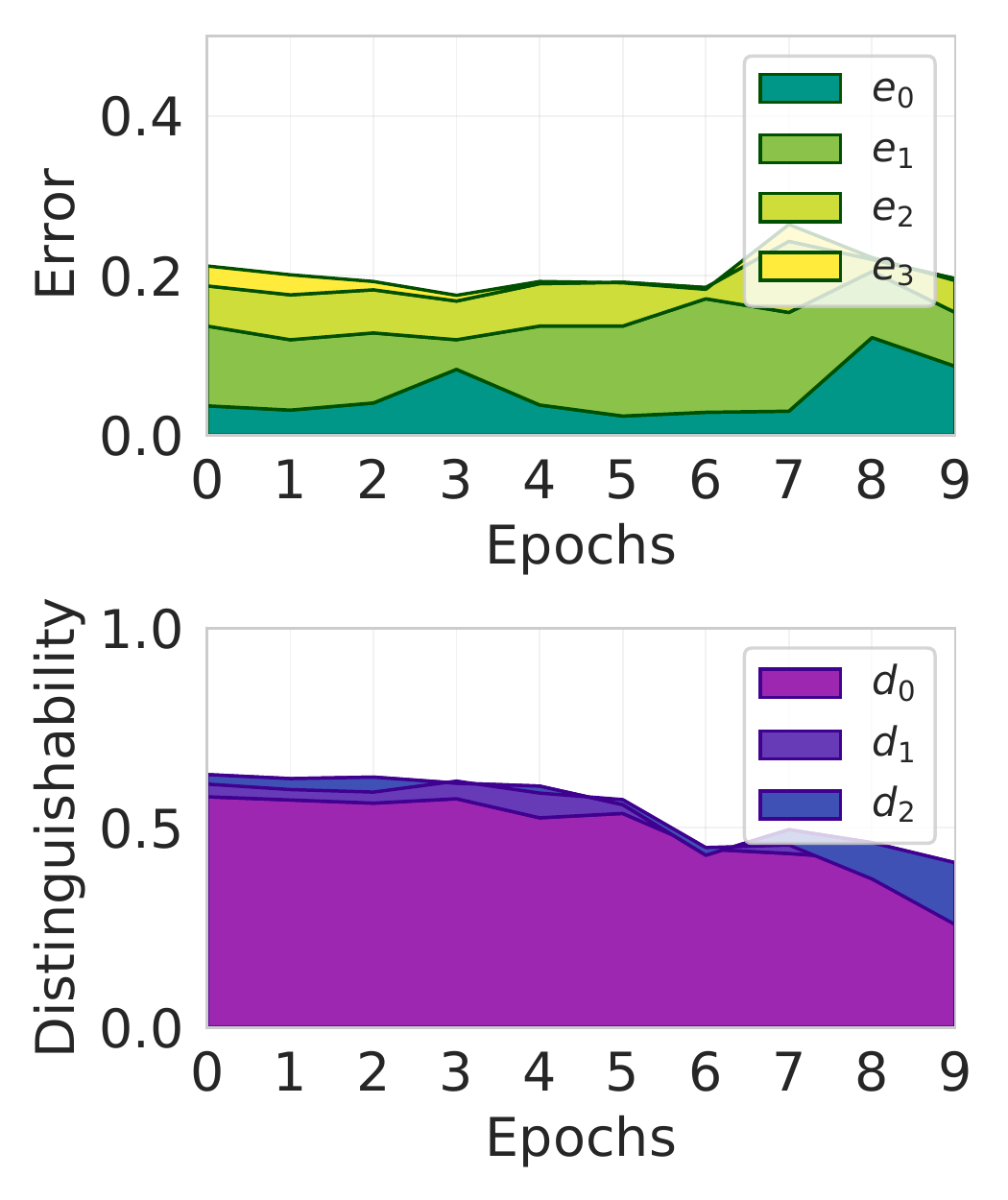}
    \caption{Camelyon17 (seed=3)}
    \label{fig:Camelyon_DeepCORAL_seed3}
\end{subfigure}
\caption{
The decomposition of generalization errors and domain-distinguishability of three DeepCORAL models trained on Camelyon17 ($\beta=5$) with different random seeds, measured on the validation domain. 
}
\label{fig:Camelyon_DeepCORAL_seed}
\end{figure}

\begin{figure}[!ht]
\centering
\begin{subfigure}{0.4\linewidth}
\includegraphics[width=\linewidth]{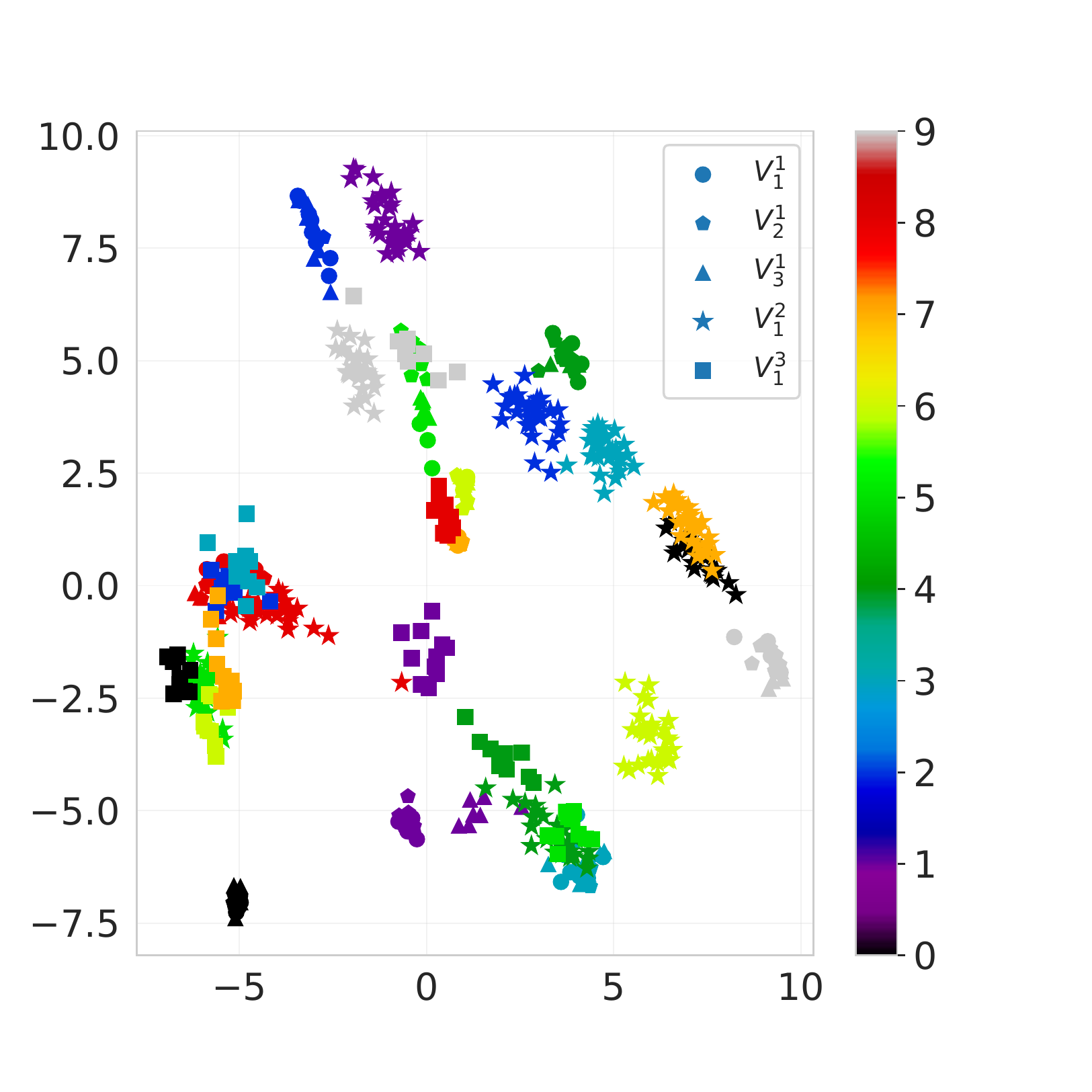}
    \caption{ERM on Colored MNIST}
    \label{fig:z_CMNIST_ERM}
\end{subfigure}
\begin{subfigure}{0.4\linewidth}
\includegraphics[width=\linewidth]{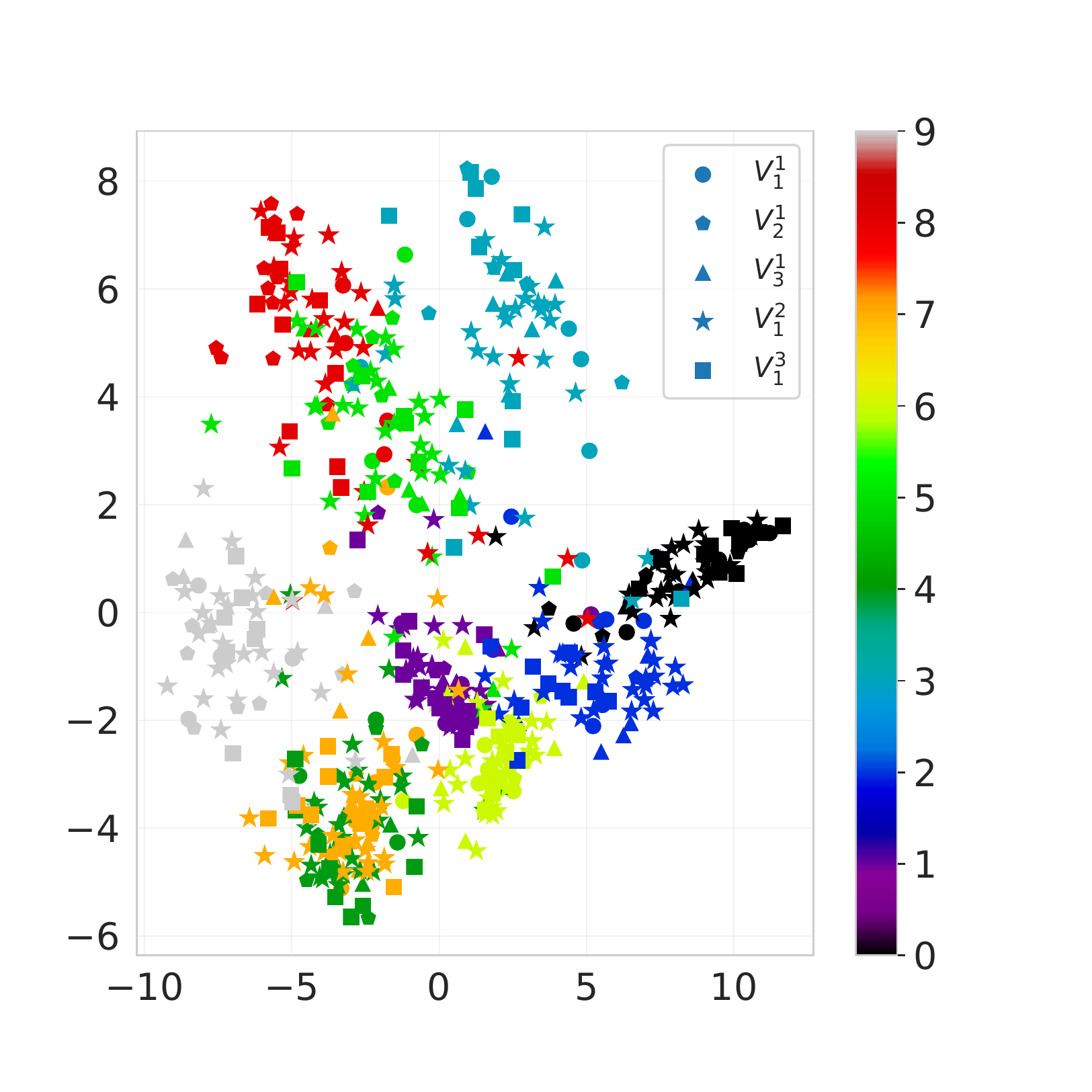}
    \caption{ERM+HSIC ($\beta=1$) on Colored MNIST}
    \label{fig:z_CMNIST_HSIC1}
\end{subfigure}

\begin{subfigure}{0.4\linewidth}
\includegraphics[width=\linewidth]{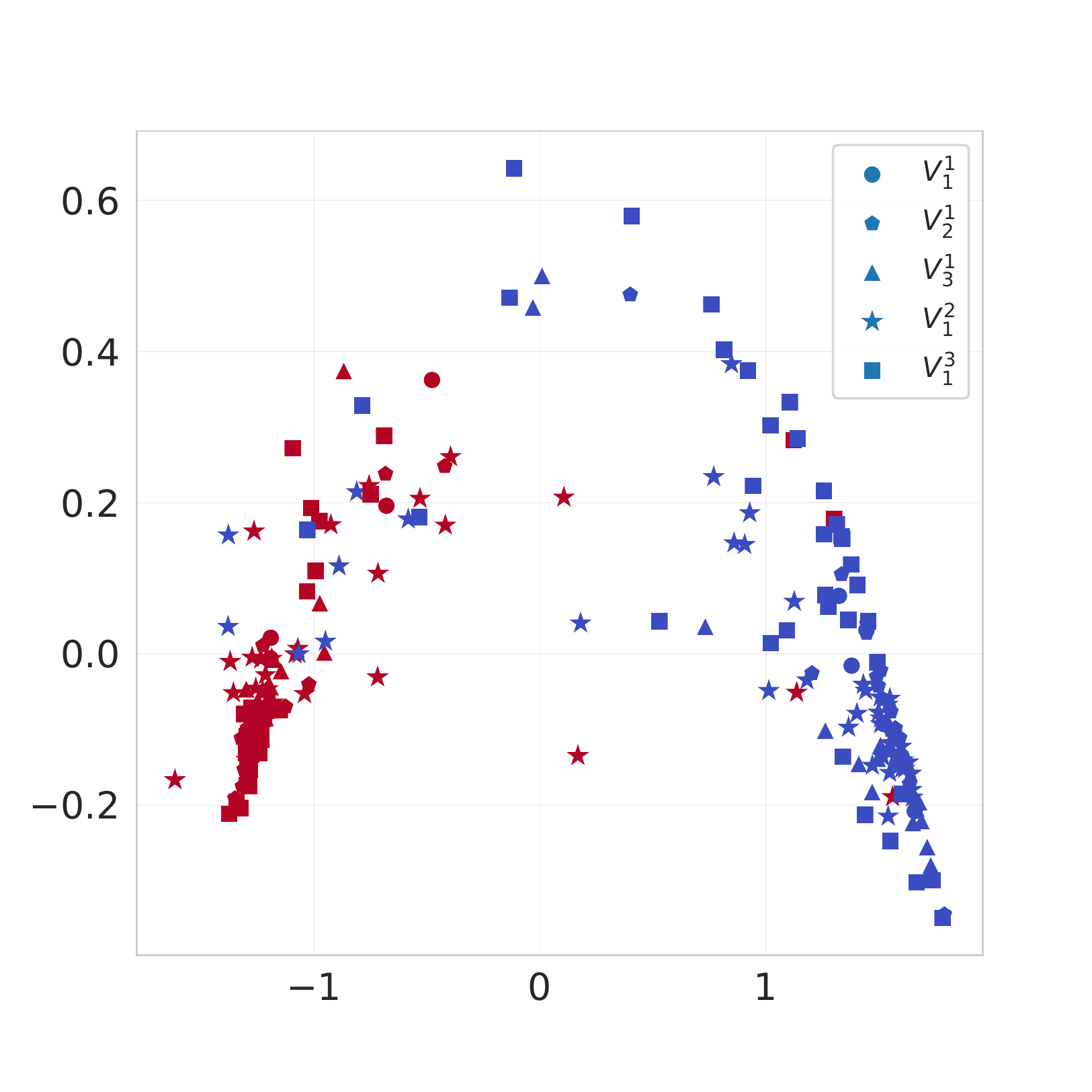}
    \caption{ERM on Camelyon17}
    \label{fig:z_Camelyon_ERM}
\end{subfigure}
\begin{subfigure}{0.4\linewidth}
\includegraphics[width=\linewidth]{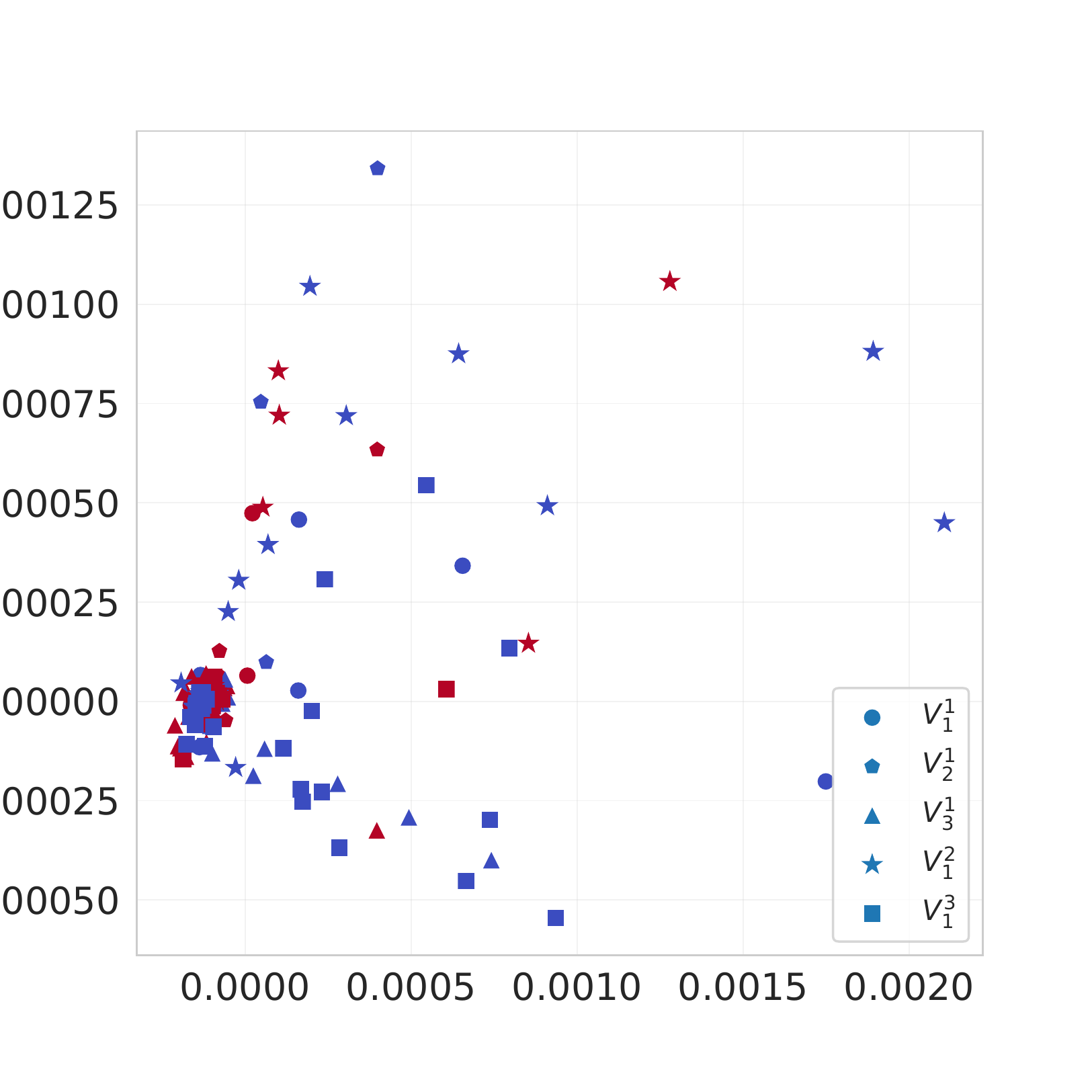}
    \caption{ERM+HSIC ($\beta=15$) on Camelyon17}
    \label{fig:z_Camelyon_HSIC15}
\end{subfigure}
\caption{2D PCA transformations of the learned representations of several models (from the last epoch). Colors indicate the labels, while marker shapes indicate the domains.}
\label{fig:z-space}
\end{figure}



\end{document}